\pgfplotsset{
    x tick style={color=black},
    y tick style={color=black}
}
\renewcommand{\labelenumi}{(\alph{enumi})}
\renewcommand\theenumi\labelenumi
\renewcommand{\labelenumi}{\theenumi}
\renewcommand{\theenumi}{(\roman{enumi})}
\newtheorem{theorem}{Theorem}
\newcommand{\oea}{\mbox{$(1 + 1)$~EA}\xspace}
\newcommand{\onemax}{\textsc{OneMax}\xspace}
\newcommand{\LO}{\textsc{Leading\-Ones}\xspace}
\newcommand{\leadingones}{\LO}
\newcommand{\needle}{\textsc{Needle}\xspace}
\newcommand{\binval}{\textsc{BinVal}\xspace}
\newcommand{\jump}{\textsc{Jump}\xspace}
\newcommand{\DLB}{\textsc{DeceptiveLeadingBlocks}\xspace}
\DeclareMathOperator{\poly}{poly}
\newcommand{\R}{\ensuremath{\mathbb{R}}}
\newcommand{\Z}{\ensuremath{\mathbb{Z}}}
\newcommand{\calA}{\ensuremath{\mathcal{A}}}
\newcommand{\calL}{\ensuremath{\mathcal{L}}}
\newcommand{\eps}{\varepsilon}
\date{}
\begin{document}
{\sloppy
\title{From Understanding Genetic Drift to a Smart-Restart Mechanism for 
Estimation-of-Distribution Algorithms\thanks{A preliminary version~\cite{DoerrZ20gecco}, prepared while the first author was with Southern University of Science and Technology, was published in the proceedings of GECCO 2020.}}


\author{Weijie Zheng\\ School of Computer Science and Technology\\
International Research Institute for Artificial Intelligence\\  
       Harbin Institute of Technology\\ 
       Shenzhen, China
\and Benjamin Doerr\thanks{Corresponding author.}\\ Laboratoire d'Informatique (LIX)\\ \'Ecole Polytechnique, CNRS\\ Institut Polytechnique de Paris\\ Palaiseau, France}

\maketitle
\begin{abstract}
Estimation-of-distribution algorithms (EDAs) are optimization algorithms that learn a distribution from which good solutions can be sampled easily. A key parameter of most EDAs is the sample size (population size). Too small values lead to the undesired effect of genetic drift, while larger values slow down the process. 

Building on a quantitative analysis of how the population size leads to genetic drift, we design a smart-restart mechanism for EDAs. By stopping runs when the risk for genetic drift is high, it automatically runs the EDA in good parameter regimes. 

Via a mathematical runtime analysis, we prove a general performance guarantee for this smart-restart scheme. For many situations where the optimal parameter values are known, this shows that the restart scheme automatically finds these optimal values, leading to the asymptotically optimal performance.

We also conduct an extensive experimental analysis. On four classic benchmarks, the smart-restart scheme leads to a performance close to the one obtainable with optimal parameter values. We also conduct experiments with PBIL (cross-entropy algorithm) on the max-cut problem and the bipartition problem. Again, the smart-restart mechanism finds much better values for the population size than those suggested in the literature, leading to a much better performance.
%
%
\end{abstract}

\maketitle

\section{Introduction}\label{sec:intro}

Different from solution-oriented optimization heuristics such as local search, simulated annealing, or genetic algorithms, estimation-of-distribution algorithms (EDAs)~\cite{LarranagaL02,PelikanHL15} try to learn a probability distribution on the search space (``probabilistic model of the search space'') that allows to sample good solutions. EDAs are iterative in nature, that is, in each iteration they sample a certain number (``population size'') of solutions, evaluate their quality, and update the previous model based on these solutions and their quality. 

The population size is crucial for the optimization behavior and the performance of the EDA. Clearly, a large population size increases the cost of a single iteration. However, a small population size means that the model update relies only on a few samples and thus is heavily influenced by the random nature of the samples. The effect that model updates are influenced more by the randomness in the sampling process than by the guidance of the objective function is known as \emph{genetic drift}. 

Genetic drift can be detrimental to the performance of an EDA. As an example, let us discuss the performance of the univariate marginal distribution algorithm (UMDA)~\cite{MuhlenbeinP96} with artificial frequency margins $\{1/n, 1-1/n\}$ on the \DLB problem with problem size~$n$.  Lehre and Nguyen~\cite[Theorem 4.9]{LehreN19foga} have shown that if the population size is small, more precisely, $\lambda=\Omega(\log n) \cap o(n)$, and the selective pressure is standard ($\mu/\lambda \ge 14/1000$), then the expected runtime is at least exponential in~$\lambda$. In contrast, if the population size is large enough, that is, $\lambda =\Omega(n\log n)$ and again $\mu=\Theta(\lambda)$, then with high probability the UMDA finds the optimum in $\lambda(n/2+2e\ln n)$ function evaluations~\cite[Theorem~5]{DoerrK21ecj}. This runtime bound is roughly proportional to the population size $\lambda$, indicating (no lower bounds were shown in~\cite{DoerrK21ecj}) that the optimal population size is just above the regime leading to the detrimental behavior observed in~\cite{LehreN19foga}, but that further increases of the population size are again costly. 

The essential reason for this performance pattern, quantified precisely by Doerr and Zheng~\cite{DoerrZ20tec}, but known already since the ground-breaking works of Shapiro~\cite{Shapiro02,Shapiro05,Shapiro06}, is that small population sizes can lead to strong genetic drift, that is, the random fluctuations of the sampling frequencies caused by the randomness in the sampling of search points eventually move some sampling frequencies towards a boundary of the frequency range that is not justified by the fitness. 

We refer to the recent survey of Krejca and Witt~\cite{KrejcaW20bookchapter} for a detailed discussion of the known runtime results for EDAs and only give a high-level summary here. For most of the results presented there, a minimum population size is necessary and then the runtime is roughly proportional to the population size. This suggests again that for small population sizes, no good performance guarantees could be proven (because of the genetic drift effect), whereas from a certain population size on this effect disappears and the runtime becomes roughly proportional to the population size (stemming from the fact that the cost of one iteration is proportional to the population size). In the presence of such a runtime behavior, naturally, choosing the appropriate population size is a key challenge in the effective usage of EDAs. 

We note that genetic drift does not in absolutely all cases lead to a bad performance. For example, independently in~\cite{DangLN19,Witt19} it was shown that the UMDA optimizes \onemax in time $O(n \log n)$ also for logarithmic population sizes, which are clearly in the regime with strong genetic drift. In~\cite{DangLN19}, it was also shown that the runtime of the UMDA on \leadingones is at most quadratic when $\lambda = \Omega(\log n) \cap O(n / \log n)$, which is again in the regime with strong genetic drift. So these results show that a good performance is also possible in the presence of strong genetic drift. We note, however, that the same runtimes of $O(n \log n)$ and $O(n^2)$ can also be obtained in the regime with low genetic drift, see again~\cite{DangLN19,Witt19}, and we note further that no example is known where an EDA has an asymptotically better performance in the strong genetic drift regime than outside of it. We finally note that the very careful experimental analysis in~\cite{LenglerSW21} shows a good runtime of the cGA on \onemax both in a range with strong genetic drift and in a range with low genetic drift (but not in between), but the runtimes observed in the former are still higher than in the latter. The experiments in~\cite{DangLN19} consider only two values for the population size $\mu$, namely $\sqrt n$ with strong genetic drift and $\sqrt n \log n$ with low genetic drift, of the UMDA optimizing \onemax and \binval. For both values good runtimes are observed, slightly better ones for the value in the genetic drift regime. However, since only two population sizes are implemented, it is not clear if really the strong genetic drift regime leads to better runtimes or if the best runtimes are observed for a value in the regime with low genetic drift, but different from the one regarded in~\cite{DangLN19}. In the light of all these results, and also our experimental results in Section~\ref{sec:exper}, it appears very justified to generally prefer running EDAs in the regime with low genetic drift.

Given the observation that genetic drift often leads to unfavorable results, there have been attempts to define EDAs that are not prone to genetic drift~\cite{Shapiro02nips,BrankeLS07,FriedrichKK16,DoerrK20tec}. While these led to some promising results, due to their restricted evaluation (only on \onemax, \binval, \leadingones, \needle, and NK-landscape) and in the light of the negative result~\cite[Theorem~4]{DoerrK20tec}, in this work we prefer to discuss how to set the parameters for established EDAs in a way that they do not suffer from genetic drift.

Setting the parameters of an optimization heuristic right is a known challenge. The most direct way is to try to understand how the parameter influences the performance of the algorithm on a given problem and then set the parameter accordingly. When done via experimental means, this approach can be time-consuming, and usually only gives information for a particular problem of a particular size. Mathematical approaches can determine optimal parameter values over larger classes of instances and problems sizes (see, e.g.~\cite{Witt13} for such a result), but they require a deep expertise and usually can only be applied to simple benchmark problems.

An easier way to approach the parameter tuning problem is to find ways to automatically set the (or some) parameters. This can be done on-the-fly, that is, the algorithm tries to learn what are good parameter values and adjusts the parameters accordingly while running, or via separate runs of the algorithm with different parameter values. From our understanding of genetic drift, we do not see how an on-the-fly parameter choice of the population size can be successful for an EDA. On the one hand, it is difficult to see during the run of the algorithm whether a model update is justified by the fitness or rather caused by unlucky samplings of the individuals. On the other hand, once the EDA has suffered from genetic drift, it is not clear how to repair the probabilistic model. 

For this reason, we shall concentrate on approaches that use several runs of the EDA with different parameter values, and this in a way that the algorithm user does not need to take care of this parameter in any way.
For EDAs having a single parameter such as the compact genetic algorithm, this will result in a parameter-less EDA.\footnote{Not surprisingly, many mechanisms to remove parameters have themselves some parameters. The name \emph{parameter-less} might still be justified when these hyperparameters have a less critical influence on the performance of the algorithm.} 

Harik and Lobo~\cite{HarikL99} proposed two strategies to remove the population size of crossover-based genetic algorithms. One basic strategy is doubling the population size and restarting when all individuals' genotypes have become identical. The drawback of this strategy is the long time it takes to fulfill this termination criterion after genetic drift has become detrimental. Harik and Lobo proposed a second strategy in which multiple populations with different sizes run simultaneously, smaller population sizes may use more function evaluations, but are removed once their fitness value falls behind the one of larger populations. Their experimental results showed that their genetic algorithm with this second strategy only had a small performance loss over the same genetic algorithm with optimal parameter settings. Many extensions of this strategy and applications to other optimization algorithms (including EDAs) have followed. These gave rise to the extended compact genetic algorithm~\cite{LimaL04}, the hierarchical Bayesian optimization algorithm~\cite{PelikanL04}, and many other algorithms. 

In the IPOP-CMA-ES,  Auger and Hansen~\cite{AugerH05} use another strategy to remove the population size as a parameter to be set by the algorithm user. They restart the kernel algorithm, the $(\mu_W,\lambda)$-CMA-ES, with twice the population size (and the other parameters unchanged) once one of five predefined criteria is reached. Four of these criteria build the covariance matrix or evolution paths and thus are specific to the CMA-ES. The other criterion only depends on the objective function and thus can be used also with other algorithms (this is what we shall do in Section~\ref{ssec:other}). This criterion triggers a restart if among the last $10+\lceil 30n/\lambda \rceil$ (where $n$ is the problem size and $\lambda$ is the population size) generations, the range of the best objective function values is zero, or the range of these values together with all function values of the current generation is below a predefined threshold.

Goldman and Punch~\cite{GoldmanP14} proposed the parameter-less population pyramid, called P3, to iteratively construct a collection of populations. In P3, the population in the pyramid expands iteratively by first adding a currently not existing solution obtained by some local search strategy into the lowest population, and then utilizing some model-building methods to expand the population in all hierarchies of the pyramid. Since initially no population exists in the pyramid, this algorithm frees the practitioner from specifying a population size. 

In~\cite[Section 2.4]{Doerr21cgajump} a strategy was proposed that builds on parallel runs of EDAs with exponentially growing population sizes. With a suitable strategy to assign computational resources, this strategy needs no criterion when to abort a run and still leads to a runtime which is only by a logarithmic factor above the runtime stemming from the optimal (problem- and algorithm-specific) population size of the EDA. 

\textbf{Our contribution:}
The above parameter-less strategies can be used to automatically find good population sizes, but they are all not specific to the problem of preventing genetic drift. In this work, we aim at profiting from our understanding of genetic drift, in particular, from the recent mathematical analysis~\cite{DoerrZ20tec} which quantifies when genetic drift can arise. In very simple words, taking the compact genetic algorithm (cGA), the most simple univariate EDA,
as an example, this result shows that genetic drift has a significant influence on the sampling frequency of a bit when the number of iterations exceeds $4\mu^2$~\cite[Proof of Theorem~6]{DoerrZ20tec}, where $\mu$ is the hypothetical population size of the cGA, the only parameter of this algorithm, which plays the same role as the population size in other EDAs. We can use this insight to design the following smart-restart version of the cGA. Our \emph{smart-restart mechanism}\footnote{The authors are thankful to an anonymous reviewer of~\cite{DoerrZ20gecco} for suggesting this name.} starts with running the cGA with hypothetical population size $\mu = 2$, the smallest possible value for this parameter. After $4\mu^2$ iterations, this run is aborted and a new run is started with a randomly initialized probabilistic model, however with twice the hypothetical population size. Each new run is aborted when the time limit of $4 \mu^2$ iterations is reached and a new run with twice the parameter value is started. With this procedure, the cGA always runs in a regime in which the risk for genetic drift is considered low. The doubling scheme of the parameter value not only ensures that the possibly more effective smaller values are used first, but also ensures that their influence on the total runtime is small in the case where only a large value is successful. Like any EDA, this is an anytime algorithm, so it can be stopped at any time and then the best solution seen so far is returned.

In more detail and generality, the quantitative analysis in~\cite{DoerrZ20tec} showed that for each of the three main univariate EDAs cGA, UMDA, and PBIL, both with frequency boundaries and without, there is a number $C$ such that the probability that a particular sampling frequency of the EDA running with (hypothetical) population size $\mu$ within the first $t$ function evaluations is strongly affected by genetic drift, is at most $2\exp(-C\mu^2/t)$. This number $C$ depends on the EDA and on the value of its other parameters (in a manner made precise in~\cite{DoerrZ20tec}). Hence in all cases, genetic drift affects a sampling frequency after $\Theta(\mu^2)$ function evaluations, however, the implicit constant may depend on the precise setting. For this reason, we shall formulate our general smart-restart scheme, applicable to all these EDAs, with a hyperparameter $b$, called \emph{budget factor}, such that a run with parameter value $\mu$ is aborted after $b\mu^2$ function evaluations. In our asymptotic analyses, we allow that $b$ takes sub-constant values. The main motivation for this is that when taking $b = \Theta(1 / \log n)$, we can apply a union-bound argument to show that none of the $n$ sampling frequencies is strongly affected by genetic drift. For the increase of $\mu$ in the next stage of the algorithm, for reasons of generality, we not only consider doubling $\mu$, but multiplying it by some number $U > 1$, called update factor. We note that we have just introduced two hyperparameters for a mechanism that controls one algorithm parameter. However, as we shall see in our analyses, both theoretical and experimental, both hyperparameters are not critical for the performance of the smart-restart scheme. Taking $b = 1/\ln(n)$ and $U = 2$ gave the best asymptotic runtimes in our theoretical result and uniformly gave good results in our experiments. 

As said before, our smart-restart mechanism can be combined with any of the three main univariate EDAs, and in fact, with any heuristic \calA~ having a parameter $\mu$ such one can speculate that the runtime from a certain (unknown) value $\tilde \mu$ on is roughly linear in $\mu$. For this general setting, we prove the following \emph{mathematical runtime guarantee}. We assume that there are numbers $\tilde \mu$ and $T$ such that \calA~ with all parameter value $\mu \ge \tilde{\mu}$ solves the given problem in time $\mu T$ with probability $p > 1 - \frac 1 {U^2}$. Such a runtime behavior is very often observed in EDAs, see, e.g.,~\cite{KrejcaW20bookchapter}. We prove that under this assumption, our smart-restart mechanism with update factor $U$ and budget factor $b$ solves the problem in expected time 
\[\left(\frac{U^2}{U^2-1}+\frac{(1-p)U^2}{1-(1-p)U^2}\right)\max\left\{b\tilde{\mu}^2,\frac{T^2}{b}\right\}+\frac{pU}{1-(1-p)U}\tilde{\mu}T,\] which is $O(\max\{b \tilde \mu^2, \frac{T^2}{b}, \tilde \mu T\})$ when treating $U$ and $p$ as constants.

When combining this result with several known runtime guarantees for classic EDAs (we refer to Section~\ref{ssec:specprobs} for the details), we easily derive that the smart-restart scheme with $b = \Theta(1 / \log n)$ in these situations has the same asymptotic runtime as the original EDA with optimal (problem-specific and often non-trivial to find) value for $\mu$. This in particular holds for the analysis of the cGA on noisy \onemax functions, where the optimal value for $\mu$ depends also on the intensity of the noise, hence the parameter tuning problem is further complicated by the fact that usually the noise intensity is not known. 

We then conduct an extensive \emph{experimental analysis}. We mostly concentrate on the cGA. This algorithm has a single parameter only and this is directly controlling the strength of the model update, so it appears best to study in isolation how the model update strength and automated searches for its best value influence the performance of an EDA. 

Since a good experimental understanding of how the model update strength influences the runtime does not yet exist, and since it aids in interpreting our results for the parameter-less versions of the cGA, we also conduct experiments for the original cGA with different static parameter values. We thus ran the original cGA as well as the (parameter-less) parallel-run cGA from~\cite{Doerr21cgajump} and our (parameter-less) smart-restart cGA on the benchmarks \onemax, \LO, \jump, and \DLB, both in the absence of noise and with additive centered Gaussian posterior noise as considered in~\cite{FriedrichKKS17}. For the original cGA and the noiseless scenario, this analysis confirms, for the first time experimentally for most of these benchmarks, that small population sizes can be detrimental and that from a certain population size on, a roughly linear increase of the runtime can be observed. It also confirms experimentally the insight of the (asymptotic) theory result~\cite{Doerr21cgajump} that, with the right population size, the cGA can be very efficient on \jump functions. For example, we measure a median runtime of $4 \cdot 10^6$ on the \jump function with $n=50$ and $k=10$, parameters for which, e.g., the classic \oea would take more than $10^{17}$ iterations (the \oea is the most basic evolutionary algorithm, using a population size of one, creating one offspring from the single parent via standard bit-wise mutation, and replacing the parent by the offspring if the offspring is at least as good as the parent). In the noisy settings, we observe that the population sizes suggested (for \onemax) by the theoretical analysis~\cite{FriedrichKKS17} are much higher (roughly by a factor of $1{,}000$) than what is really necessary, leading to runtime increases of similar orders of magnitudes.

The parameter-less versions of the cGA, namely the parallel-run version and the smart-restart version with budget factors $b=16$ and $b = 1/\ln(n)$, generally perform very well. Their runtimes are, naturally, larger than the runtimes observed for the optimal static parameter value (which depends heavily on the problem and the noise level), but they clearly avoid the often catastrophic performances in the strong genetic-drift regime. Overall, the smart-restart cGA with cautious budget factor $b = 1/ \ln(n)$ appears best, in particular, for the two more difficult benchmarks \jump and \DLB. 

We also extend our smart-restart mechanism to a more complex EDA, population-based incremental learning (PBIL), which can be seen as a variant of the cross-entropy algorithm~\cite{CostaJK07,BoerKMR05}. Since this algorithm has three parameters, it is not immediately obvious how to use our smart-restart approach. We solve this problem by keeping the selection pressure $\eta=\mu/\lambda$ and the learning rate $\rho$ as parameters and by setting the sample size $\lambda$ via the smart-restart mechanism (again guided by~\cite[Theorem~3]{DoerrZ20tec}). We apply this algorithm to two combinatorial optimization problems from the literature~\cite{RubinsteinK04}, the max-cut problem and the bipartition problem. Our empirical results show that the smart-restart mechanism uses much better values for the population size than the hand-crafted parameters of the previous work~\cite{RubinsteinK04}, resulting in significant speed-ups. We also implemented the  two restart strategies from~\cite{HarikL99,AugerH05} discussed earlier. In our experiments, our smart-restart mechanism shows better performance on both combinatorial optimization problems.

We note that this version extends our preliminary version~\cite{DoerrZ20gecco}, and differs largely in the following ways. Both in the mathematical runtime analysis and in the experiments, we also consider the presence of additive posterior noise for the cGA. To demonstrate that our general approach is feasible also for other EDAs (\cite{DoerrZ20gecco} only considers the cGA), we add a runtime analysis for the smart-restart UMDA and evaluate a smart-restart version of  PBIL (cross-entropy algorithm) with two other restart strategies for comparison on two combinatorial problems. Finally, this version contains all mathematical proofs that had to be omitted in~\cite{DoerrZ20gecco} for reasons of space.

The remainder of this paper is structured as follows. Section~\ref{sec:pre} introduces the preliminaries including a detailed description of the related algorithms and benchmark functions.
The newly-proposed smart-restart mechanism will be stated in Section~\ref{sec:parameterlesscGA}. Section~\ref{sec:theory} shows our theoretical results. Section~\ref{sec:exper} contains our experimental analyses on classic benchmark functions. The extension of our mechanism to the PBIL and an experimental analysis of this algorithm on two combinatorial problems are presented in Section~\ref{sec:pbil}. Section~\ref{sec:con} concludes our paper.

\section{Preliminaries}
\label{sec:pre}

\subsection{Algorithms}\label{subsec:algos}
In this paper, we consider algorithms maximizing pseudo-Boolean functions $f:\{0,1\}^n \rightarrow \R$. We regard so-called anytime algorithms, that is, algorithms that can be stopped at any time, and then return the best solution seen so far. In practice, such algorithms are run with some user-specified termination criterion. In our mathematical analyses, we regard the time it takes until an optimum is generated if the algorithm is not stopped prematurely. For that reason, we do not specify a termination criterion here.

Since our smart-restart mechanism builds on an original heuristic, such as the cGA of  Harik et al.~\cite{HarikLG99} or the UMDA of Mu\"hlenbein and Paass~\cite{MuhlenbeinP96}, and since we will compare our smart-restart cGA with the parallel-run cGA~\cite{Doerr21cgajump}, this subsection will give a brief introduction to these algorithms. We shall, in Section~\ref{sec:pbil}, also discuss the general performance of our smart-restart PBIL (cross-entropy algorithm), and for the convenience of reading, we will introduce this algorithm in Section~\ref{sec:pbil}. 

In the following, we shall use $X^g_{i}=(X^g_{i,1},\dots,X^g_{i,n}) \in \{0,1\}^n$ to denote the $i$-th bitstring in the $g$-th iteration of the algorithm and $X^g_{i,j}$ for the $j$-th bit of $X^g_{i}$. We use $p^g=(p^g_{1},\dots,p^g_{n})$ to denote the {(univariate)} probabilistic model learned in the $g$-th iteration and $p^g_{j}$ for the $j$-th entry of $p^g$.

\subsubsection{The Compact Genetic Algorithm}\label{subsec:cga}
The compact genetic algorithm (cGA) with hypothetical population size $\mu$ samples two individuals in each generation and moves the sampling frequencies by an absolute value of $1/\mu$ towards the bit values of the better individual. Usually, and so do we, in order to avoid frequencies reaching the absorbing boundaries $0$ or $1$, the artificial margins $1/n$ and $1-1/n$ are utilized, that is, we restrict the frequency values to be in the interval $[1/n,1-1/n]$. The following Algorithm~\ref{alg:cGA} shows the details. As is common in runtime analysis, we do not specify a termination criterion. When talking about the runtime of an algorithm, we mean the first time (measured by the number of fitness evaluations) an optimum was sampled.
\begin{algorithm}[!ht]
\caption{The cGA to maximize a function $f: \{0,1\}^n \rightarrow \R$ with hypothetical population size $\mu$}
{\small
 \begin{algorithmic}[1]
 \STATE{$p^0=(\tfrac{1}{2}, \tfrac{1}{2},\dots,\tfrac{1}{2})\in [0,1]^n$}
 \FOR {$g=1,2,\dots$}
 \STATEx {$\quad\%\%$\textsl{Sample two individuals $X_1^g,X_2^g$}}
 \FOR {$i=1,2$}
 \FOR {$j=1,2,\dots,n$}
 \STATE $X_{i,j}^g \leftarrow 1$ with probability $p_{j}^{g-1}$ and $X_{i,j}^g \leftarrow 0$ with probability $1-p_{j}^{g-1}$
 \ENDFOR
 \ENDFOR
 \STATEx {$\quad\%\%$\textsl{Update of the frequency vector}}
 \IF{$f(X_1^g) \ge f(X_2^g)$}
 \STATE {$p'=p^{g-1}+\tfrac{1}{\mu}(X_1^g-X_2^g)$}
 \ELSE 
  \STATE {$p'=p^{g-1}+\tfrac{1}{\mu}(X_2^g-X_1^g)$}
  \ENDIF
 \STATE {$p^g=\min \{\max\{\tfrac{1}{n},p'\},1-\tfrac{1}{n}\}$}
 \ENDFOR
 \end{algorithmic}
 \label{alg:cGA}
}
\end{algorithm}

\subsubsection{The Univariate Marginal Distribution Algorithm}\label{subsubsec:umda}

The univariate marginal distribution algorithm (UMDA) samples $\lambda$ individuals in each generation and selects the best $\mu$ individuals to learn its probabilistic model. More precisely, the new sampling frequency $p_i$ of the $i$-th bit is taken as the ratio of ones in the $i$-th bit of the $\mu$ selected individuals. Similar to the cGA, the artificial margins $1/n$ and $1-1/n$ are utilized to avoid the frequencies reaching the boundaries $0$ or $1$. See Algorithm~\ref{alg:umda} for details.
\begin{algorithm}[!ht]
\caption{The UMDA (with sample size $\lambda$, and selection size $\mu$) to maximize a function $f: \{0,1\}^n \rightarrow \R$}
{\small
 \begin{algorithmic}[1]
 \STATE{$p^0=(\tfrac{1}{2}, \tfrac{1}{2},\dots,\tfrac{1}{2})\in [0,1]^n$}
 \FOR {$g=1,2,\dots$}
 \STATEx {$\quad\%\%$\textsl{Sample $\lambda$ individuals $X_1^g,\dots,X_{\lambda}^g$}}
 \FOR {$i=1,2,\dots,\lambda$}
 \FOR {$j=1,2,\dots,n$}
 \STATE $X_{i,j}^g \leftarrow 1$ with probability $p_{j}^{g-1}$ and $X_{i,j}^g \leftarrow 0$ with probability $1-p_{j}^{g-1}$
 \ENDFOR
 \ENDFOR
 \STATEx {$\quad\%\%$\textsl{Update of the frequency vector}}
 \STATE Let $\tilde{X}_1^g,\dots,\tilde{X}_{\mu}^g$ be the best $\mu$ individuals (tie broken randomly)
 \STATE {$p' = \frac{1}{\mu} \sum_{i=1}^{\mu}\tilde{X}_i^g$}
 \STATE {$p^g=\min \{\max\{\tfrac{1}{n}, p' \},1-\tfrac{1}{n}\}$}
 \ENDFOR
 \end{algorithmic}
 \label{alg:umda}
}
\end{algorithm}

\subsubsection{The Parallel-run cGA}
\label{subsec:prllcGA}
The parallel EDA mechanism was proposed by Doerr~\cite{Doerr21cgajump} as a side result when discussing the connection between runtime bounds that hold with high probability and the expected runtime. For the cGA, this mechanism yields the following \emph{parallel-run cGA}. In the initial round $\ell = 1$, we start process $\ell=1$ to run the cGA with population size $\mu=2^{\ell -1}$ for $1$ generation. In round $\ell = 2, 3, \dots$, all running processes $j=1,\dots, \ell-1$ run $2^{\ell-1}$ generations and then we start process $\ell$ to run the cGA with population size $\mu=2^{\ell-1}$ for $\sum_{i=0}^{\ell-1} 2^i$ generations. The algorithm terminates once any process has found the optimum. 
Algorithm~\ref{alg:prllcGA} shows the details of the parallel-run cGA.

Based on the following assumption, Doerr~\cite{Doerr21cgajump} proved that the expected runtime for this parallel-run cGA is at most $6 \tilde \mu T (\log_2(\tilde \mu T) + 3)$.

\textbf{Assumption~\cite{Doerr21cgajump}:} Consider using the cGA with population size $\mu$ to maximize a given function $f$. Assume that there are unknown $\tilde{\mu}$ and $T$ such that the cGA for all population sizes $\mu \ge \tilde{\mu}$ optimizes this function $f$ in $\mu T$ fitness evaluations with probability at least $\tfrac 34$.\\

\begin{algorithm}[!ht]
\caption{The parallel-run cGA to maximize a function $f: \{0,1\}^n \rightarrow \R$}
{\small
 \begin{algorithmic}[1]
 \STATE Process 1 runs cGA (Algorithm~\ref{alg:cGA}) with population size $\mu=1$ for $1$ generation
 \FOR {round $\ell=2,\dots$}
 \STATE Processes $1,\dots, \ell-1$ continue to run for another $2^{\ell-1}$ generations, one process after the other one
 \STATE Start process $\ell$ to run cGA (Algorithm~\ref{alg:cGA}) with population size $\mu=2^{\ell-1}$ to maximize $f$ and run it for $\sum_{i=0}^{\ell-1}2^i$ generations 
 \ENDFOR
 \end{algorithmic}
 \label{alg:prllcGA}
}
\end{algorithm}


\subsection{Benchmark Functions}\label{ssec:benchmarks}

To understand the performance of the mechanism proposed in this work, we regard four basic benchmark functions (detailed in Section~\ref{sssec:basic} below). They are all popular benchmarks in the analysis of evolutionary algorithms, so they are well-understood both from a theoretical and an experimental point of view, which helps interpreting our results. To further test the performance of the proposed approach in a more complicated setting, we also consider a noisy environment (detailed in Section~\ref{ssec:intronoise}). We shall, in Section~\ref{sec:pbil}, also conduct an experimental investigation of the PBIL (cross-entropy algorithm) on particular instances of two combinatorial optimization problems (namely those suggested in~\cite{RubinsteinK04} to show the power of this algorithm), but since these results cannot be easily compared to the other ones,  among others, because no theory exists for these instances, we describe these problems not here, but in Section~\ref{sec:pbil}.

\subsubsection{Basic Benchmark Functions}\label{sssec:basic}

We selected the four benchmark functions \onemax, \LO, \jump, and \DLB as optimization problems. 
All four problems are defined on binary representations (bit strings) and we use $n$ to denote their length, that is, all are functions $f:\{0,1\}^n \rightarrow \R$.

The \textbf{\onemax} problem is one of the easiest benchmark problems. The \onemax fitness of a bit string is simply the number of ones in the bit string. Formally, the \onemax function value of any $x=(x_1,\dots,x_n)\in\{0,1\}^n$ is defined by 
$$
f(x)= \sum_{i=1}^n x_i.
$$ 
Having the perfect fitness-distance correlation, most evolutionary algorithms find it easy to optimize \onemax. A common and often easy to prove runtime is $\Theta(n \log n)$~\cite{Muhlenbein92,GarnierKS99,JansenJW05,Witt06,RoweS14,DoerrK15,AntipovD21algo,OlivetoSW22}. For EDAs, apparently, the runtime of \onemax is more complicated. The known results for EDAs are the following. The first mathematical runtime analysis for EDAs by Droste~\cite{Droste06} together with the recent work~\cite{SudholtW19} shows that the cGA can efficiently optimize \onemax in time $\Theta(\mu\sqrt n)$ when $\mu \ge K \sqrt n \ln(n)$ for some sufficiently large constant $K$. As the proofs of this result show (and the same could be concluded from the general result~\cite{DoerrZ20tec}), in this parameter regime there is little genetic drift. Throughout the runtime, with high probability, all bit frequencies stay above $\frac 14$. For hypothetical population sizes below the $\sqrt n \log n$ threshold, the situation is less understood. However, the lower bound of $\Omega(\mu^{1/3} n)$ valid for all $\mu = O\left(\frac{\sqrt n}{\ln(n) \ln\ln(n)}\right)$ proven in~\cite{LenglerSW21} together with its proof shows that in parts of this regime the cGA suffers from genetic drift, leading to (mildly) higher runtimes. 

For another EDA, the UMDA with sampling population size $\lambda$ and selected population size $\mu$, a runtime of $\Omega(\lambda \sqrt n+n\log n)$ for $\mu=\Theta(\lambda)$ and $\lambda \in \poly(n)$ is shown in~\cite{KrejcaW20}. Upper bounds were independently proven in~\cite{Witt19,DangLN19}. For $\mu=\Theta(\lambda)$,~\cite{Witt19} proves a runtime of $O(\lambda n)$ for $\lambda=\Omega(\log n) \cap o(n)$ and a runtime of $O(\lambda \sqrt n)$ for $\lambda=\Omega(\sqrt n \log n) \cap n^{O(1)}$. Replacing the restriction $\mu=\Theta(\lambda)$ with $\lambda=\Omega(\mu)$, in~\cite{DangLN19}, an upper bound of $O(\lambda n)$ is obtained for $\lambda=\Omega(\log n) \cap O(\sqrt n)$ and one of $O(\lambda \sqrt n)$ for $\lambda=\Omega(\sqrt n \log n)$.

The \textbf{\LO} benchmark is still an easy unimodal problem, however, typically harder than \onemax. The \leadingones value of a bit string is the number of ones in it, counted from left to right, until the first zero. Formally, the \LO function value of any $x=(x_1,\dots,x_n)\in\{0,1\}^n$ is defined by 
$$
f(x)= \sum_{i=1}^n \prod_{j=1}^ix_j.
$$
How simple randomized search heuristics optimize \leadingones is extremely well understood~\cite{DrosteJW02,JansenJW05,Witt06,BottcherDN10,Sudholt13,Doerr19tcs,LissovoiOW20ecj,Sudholt21,DoerrDL21}, many EAs optimize this benchmark in time $\Theta(n^2)$. Surprisingly, no theoretical results are known on how the cGA optimizes \leadingones. However, the runtime of the UMDA with population sizes $\mu = \Theta(\lambda)$ with suitable implicit constants and $\lambda = \Omega(\log n)$ was shown to be $O(n \lambda \log(\lambda) + n^2)$~\cite{DangLN19} and, recently, $\Theta(n \lambda)$ for $\lambda = \Omega(n \log n)$~\cite{DoerrK21tcs}. We remark that~\cite{DoerrZ20tec} for this situation shows that genetic drift occurs when $\lambda = O(n)$ (with suitable implicit constants). Consequently, these results show a roughly linear influence of $\lambda$ on the runtime when $\lambda$ is (roughly) at least linear in $n$, but below this value, there is apparently no big penalty for running the EDA in the genetic drift regime. For the cGA, we will observe a different behavior, which also indicates that translating general behaviors from one EDA to another, even within the class of univariate EDAs, has to be done with caution.

The \textbf{\jump} benchmark is a class of multimodal fitness landscapes of scalable difficulty. For a difficulty parameter $k$, the fitness landscape is isomorphic to the one of \onemax except that there is a valley of low fitness of width $k$ around the optimum. More precisely, all search points in distance $1$ to $k-1$ from the optimum have a fitness lower than all other search points. 
Formally, the $\jump_k$ function (with $k \in [1..n]$) is defined by   
\begin{align*}
f(x)&=
\begin{cases}
k+\sum_{i=1}^nx_i, & \text{if~} \sum_{i=1}^nx_i \le n-k \text{~or~} x=1^n,\\
n-\sum_{i=1}^nx_i, & \text{else},
\end{cases}
\end{align*} 
for any $x=(x_1,\dots,x_n)\in\{0,1\}^n$.

Recent results~\cite{HasenohrlS18,Doerr21cgajump} show that when $\mu$ is large enough, then the cGA can optimize \jump functions quite efficiently and significantly more efficient than many classic evolutionary algorithms. We omit some details and only mention that for $k$ not too small, a runtime exponential in $k$ results from a population size $\mu$ that is also exponential in $k$. This is much better than the $\Omega(n^k)$ runtime of typical mutation-based evolutionary algorithms~\cite{DrosteJW02,DoerrLMN17,Doerr22,RajabiW22} or the $n^{O(k)}$ runtime bounds shown for several crossover-based algorithms~\cite{DangFKKLOSS16,AntipovDK22}. We note that $O(n)$ and $O(n \log n)$ runtimes have been shown in \cite{WhitleyVHM18,RoweA19}, however, these algorithms appear quite problem-specific~\cite{Witt23} and have not been regarded in other contexts so far. It was not known whether the runtime of the cGA becomes worse in the regime with genetic drift, but our experimental results now show an enormously weak performance in this regime.

The \textbf{\DLB} benchmark was introduced in~\cite{LehreN19foga}. It can be seen as a deceptive version of the \leadingones benchmark. In \DLB, the bits are partitioned into blocks of length two in a left-to-right fashion. The fitness is computed as follows. Counting from left to right, each block that consists of two ones contributes two to the fitness, until the first block is reached that does not consist of two ones. This block contributes one to the fitness if it consists of two zeros, otherwise it contributes zero. All further blocks do not contribute to the fitness. 
Formally, the \DLB (requiring $n$ is even) function value of any $x=(x_1,\dots,x_n)\in\{0,1\}^n$ is defined by 
\begin{align*}
f(x)&=
\begin{cases}
2m+1, & \text{if~} x_{[1..2m]}=1^{2m}\text{~and~} x_{2m+1}=x_{2m+2}=0,\\
2m, & \text{if~} x_{[1..2m]}=1^{2m}\text{~and~} x_{2m+1}+x_{2m+2}=1,\\
n, & \text{if~} x=1^n.
\end{cases}
\end{align*} 
The main result in~\cite{LehreN19foga} is that when $\mu = \Theta(\lambda)$ and $\lambda = o(n)$, the expected runtime of the UMDA on \DLB is $\exp(\Omega(\lambda))$. We note that when $\lambda = o(n)$, already after a quadratic runtime strong genetic drift is encountered according to~\cite{DoerrZ20tec}. When $\lambda = \Omega(n \log n)$, a runtime guarantee of at most $(1+o(1)) \frac 12 \lambda n$ holds with high probabilitiy~\cite{DoerrK21ecj}. Hence for this function and the UMDA as an optimizer, the choice of the population size is again very important. This was the reason for including this function into our set of test problems and the results indicate that indeed the cGA shows a behavior similar to what the mathematical results showed for the UMDA. 

We note that \cite{LehreN19foga} also shows an expected runtime of $O(n\lambda\log \lambda + n^3)$ when $\mu = \Omega(\log n)$ and $\lambda=\Omega(\mu^2)$, which is an unusually high selection pressure. Other runtime results on the \DLB function include several $O(n^3)$ runtime guarantees for classic EAs~\cite{LehreN19foga} as well as a $\Theta(n^2)$ runtime for the Metropolis algorithm and an $O(n \log n)$ runtime guarantee for the significance-based cGA~\cite{WangZD21}. Till now, there is no theoretical runtime analysis for the cGA.

\subsubsection{Additive Centered Gaussian Posterior Noise}\label{ssec:intronoise}

In practical applications, one often encounters various forms of uncertainty. One of these is a noisy access to the objective function.  Friedrich et al.~\cite{FriedrichKKS17} analyzed how the cGA optimizes the \onemax problem under additive centered Gaussian posterior noise. They proved that for all noise intensities (variances $\sigma^2$ of the Gaussian distribution), there is a population size $\mu = \mu(\sigma^2)$ which depends only polynomially on $\sigma^2$ (that is, $\mu(\sigma^2)$ is a polynomial in $\sigma^2$) so that the cGA with this population size efficiently solves the \onemax problem. This was called \emph{graceful scaling}. They also provided a restart scheme that obtains this performance without knowledge of the noise intensity (however, it requires to know the polynomial $\mu(\sigma^2)$). Hence these results show that the cGA can deal well with the type of noise regarded, and much better than many classic evolutionary algorithms (see the lower bounds in~\cite{GiessenK16,FriedrichKKS17}), but this still needs an action by the algorithm user, namely an appropriate choice of the population size $\mu$.

As we shall show in this work, our restart scheme is also able to optimize noisy versions of \onemax and many other problems, but without knowing the polynomial $\mu(\sigma^2)$ and using significantly more efficient values for the population size. For \onemax, we prove rigorously that we obtain essentially the performance of the original cGA with the best choice of the population size (Theorem~\ref{thm:noisyom}), where we profit from the fact that the runtime analysis of~\cite{FriedrichKKS17} shows that the cGA also for noisy \onemax functions essentially satisfies our main assumption that from a certain population size on, the runtime of the cGA is at most proportional to the population size. 

We conduct experiments for various benchmark functions in this noise model. They indicate that also for problems different from \onemax, the graceful scaling property holds. However, they also show that much smaller population sizes suffice to cope with the noise. Consequently, our smart-restart cGA (as well as the parallel-run cGA from~\cite{Doerr21cgajump}) optimizes \onemax much faster than the algorithms proposed in~\cite{FriedrichKKS17}. This is natural since the parameter-less approaches also try smaller (more efficient in case of success) population sizes, whereas the approaches in~\cite{FriedrichKKS17} use a population size large enough that one can prove via mathematical means that they will be successful with high probability. 

We now make precise the additive centered Gaussian noise model. We take the common assumption that whenever the noisy fitness of a search point is regarded in a run of the algorithm, its noisy fitness is computed anew, that is, with newly sampled noise. This avoids that a single exceptional noise event misguides the algorithm for the remaining run. A comparison of the results in~\cite{SudholtT12} (without independent reevaluations) and~\cite{DoerrHK12ants} (with reevaluations) shows how detrimental sticking to previous evaluations can be. We regard posterior noise, that is, the noisy fitness value is obtained from a perturbation of the original fitness value (independent of the argument) as opposed to prior noise, where the algorithm works with the fitness of a perturbed search point. We regard additive perturbations, hence the perceived fitness of a search point $x$ is $f(x)+D$, where $f$ is the original fitness function and $D$ is an independent sample from a distribution describing the noise. Since we consider centered Gaussian noise, we always have $D \sim \mathcal{N}(0,\sigma^2)$, where $\mathcal{N}(0,\sigma^2)$ denotes the Gaussian distribution with expectation zero and variance $\sigma^2 \ge 0$. Obviously, the classic noise-free optimization scenario is subsumed by the special case $\sigma^2=0$. 

\section{The Smart-Restart Mechanism}
\label{sec:parameterlesscGA}

In this section, we introduce our \emph{smart-restart mechanism}. It can be applied to any randomized search heuristic $\calA$ having an integral parameter $\mu$ and it makes sense when we can assume that the algorithm has a good performance from a certain value for $\mu$ on, a situation often encountered in EDAs. In contrast to the parallel-run mechanism proposed in~\cite{Doerr21cgajump}, which applies to the same scenario, the smart-restart mechanism does not run processes in parallel, which is an advantage from the implementation point of view. The main advantage we aim for is that by predicting when runs with a certain parameter value become hopeless, we can abort these runs and save runtime.

As in~\cite{Doerr21cgajump}, in this exposition we let ourselves be guided by the cGA as base algorithm~$\calA$ as it is maybe the simplest algorithm in which a parameter behavior as sketched above is encountered. To decide when to abort a run, we use the first tight quantification of the genetic drift effect of the EDAs by Doerr and Zheng~\cite{DoerrZ20tec}. In detail, they proved that in a run of the cGA with hypothetical population size $\mu$ a frequency of a neutral bit will reach the boundaries of the frequency range in an expected number of at most $4\mu^2$ generations (equivalent to $8\mu^2$ fitness evaluations), which is asymptotically tight. By Markov's inequality the probability that a boundary is reached in $b\mu^2, b>{8}$, fitness evaluations, is at least $1-{8}/b$. 

This finding inspires the following restart scheme (for any randomized search heuristic $\calA$ with a parameter $\mu$), also described in Algorithm~\ref{alg:nonpcGA}. We repeat running algorithm $\calA$ with increasing values for $\mu$, each time until we decide to abort such a run. For the $\ell$-th run, $\ell = 1, 2, \dots$, we use the parameter value $\mu_\ell = 2 U^{\ell-1}$, that is, we start with a small value $\mu=2$ and increase $\mu$ by a factor of $U > 1$, called \emph{update factor}, from one run to the next. We abort the $\ell$-th run after $B_\ell = b \mu_\ell^2$ fitness evaluations, where $b$ is the second parameter of the restart scheme, called \emph{budget factor}. As before, we do not specify a termination criterion since for our analysis we just count the number of fitness evaluations until a desired solution is found. 

We shall discuss the settings of the parameters $U$ and $b$ later in more detail. As a motivating example, we note that for the cGA by taking $b = 16$, the above Markov bound argument shows that in each such run, each bit has a probability of at most $\frac 12$ to be subject to strong genetic drift. Hence this restart scheme manages to run the cGA in a way that genetic drift does not affect too many bits. We shall later see that a budget factor $b = O(1 / \log n)$ can even ensure that none of the bits is subject to strong genetic drift.

\begin{algorithm}[!ht]
\caption{The smart-restart mechanism with update factor $U$ and budget factor $b$ applied to an algorithm $\calA$ with parameter $\mu$ for the maximization of a function $f: \{0,1\}^n \rightarrow \R$. }
{\small
 \begin{algorithmic}[1]
 \FOR { $\ell=1,2,\dots$}
 \STATE Run $\calA$ with parameter value $\mu_{\ell}=2U^{\ell-1}$ for $B_{\ell}=b\mu_{\ell}^2$ fitness evaluations on the maximization problem~$f$
 \ENDFOR
 \end{algorithmic}
 \label{alg:nonpcGA}
}
\end{algorithm}

\section{Theoretical Analyses}
\label{sec:theory}

In this section, we prove mathematical runtime guarantees for our smart-restart mechanism.

\subsection{A General Performance Guarantee}

We follow the general approach of~\cite[Section~2.4]{Doerr21cgajump} of assuming that the runtime increases linearly with the population size from a given minimum size $\tilde \mu$ on. We do not restrict this property for the cGA with population size $\mu$, but for the general EDA (or any randomized search heuristic) with a parameter $\mu$.

\textbf{Assumption (L):} Let $p \in (0,1]$. Consider using an EDA (or a randomized search heuristic) with parameter~$\mu$ to maximize a given function $f$. Assume that there are unknown $\tilde{\mu}$ and $T$ such that the EDA (or randomized search heuristic) for all parameter values $\mu \ge \tilde{\mu}$ optimizes $f$ within $\mu T$ fitness evaluations with probability at least~$p$.

This Assumption~(L) is identical to the assumption taken in~\cite{Doerr21cgajump} except that there $p$ was required to be at least $3/4$, whereas we allow a general positive~$p$ (but note that we will require $p > 1 - \frac{1}{U^2}$, hence a small $p$ limits the choice of $U$). Since most existing runtime analyses give bounds with success probability $1-o(1)$, this difference is, of course, not very important. We note that the proof of the result in~\cite{Doerr21cgajump} requires $p$ to be at least $3/4$, but we also note that an elementary probability amplification argument (via independent restarts) allows to increase the success probability of a given algorithm, so that the result of~\cite{Doerr21cgajump} becomes applicable to this modified algorithm.

Under this Assumption~(L), we obtain the following result. We note that it is non-asymptotic, which later allows to easily obtain asymptotic results also for non-constant parameters. We note that when assuming $p$ and $U$ to be constants (which is very natural), then the bound becomes $O(\max\{b\tilde{\mu}^2,{T^2}/{b},\tilde{\mu}T\})$.

\begin{theorem}
Let  $U >1$ and $b > 0$. Consider using the smart-restart mechanism on an EDA (or a randomized search heuristic)
with update factor $U$ and budget $B_{\ell}=b\mu_{\ell}^2, \ell=1,2,\dots$, optimizing a function $f$ satisfying Assumption~(L) with $p \in (1-\frac{1}{U^2},1]$. Then the expected time until the optimum of $f$ is generated is at most 
\[
\left(\frac{U^2}{U^2-1}+\frac{(1-p)U^2}{1-(1-p)U^2}\right)\max\left\{b\tilde{\mu}^2,\frac{T^2}{b}\right\}+\frac{pU}{1-(1-p)U}\tilde{\mu}T
\]
fitness evaluations.
\label{thm:nonpcGAwAssume}
\end{theorem}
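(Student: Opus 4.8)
The plan is to decompose the run of the smart-restart mechanism into two phases: the "pre-threshold" phase, consisting of the rounds $\ell$ with $\mu_\ell < \tilde\mu$, and the "post-threshold" phase, consisting of the rounds with $\mu_\ell \ge \tilde\mu$. First I would handle the pre-threshold phase by a deterministic bound: all these rounds together cost at most $\sum_{\ell : \mu_\ell < \tilde\mu} b\mu_\ell^2$, and since the $\mu_\ell$ form a geometric sequence with ratio $U$, this geometric sum is dominated by its largest term, which is at most $b\tilde\mu^2$ (up to the factor $\frac{U^2}{U^2-1}$ coming from $\sum_{j\ge 0} U^{-2j} = \frac{U^2}{U^2-1}$). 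This explains the first summand in the $\max$ and the leading $\frac{U^2}{U^2-1}$ factor. There is a mild subtlety: one wants the budget $b\mu_\ell^2$ of a round to be large enough to give algorithm $\calA$ its promised $\mu_\ell T$ fitness evaluations once $\mu_\ell \ge \tilde\mu$; this requires $b\mu_\ell^2 \ge \mu_\ell T$, i.e.\ $\mu_\ell \ge T/b$, which is exactly why the term $T^2/b$ appears in the $\max$ — if $\tilde\mu < T/b$, the first rounds that are "effective" are those with $\mu_\ell \ge T/b$ rather than $\mu_\ell \ge \tilde\mu$. So I would define the effective threshold as $\mu^* := \max\{\tilde\mu, T/b\}$ (rounding up to the next power-of-$U$ times 2 where needed) and note $b(\mu^*)^2 = \max\{b\tilde\mu^2, T^2/b\}$ when $\tilde\mu$ and $T/b$ are comparable, and in general $b(\mu^*)^2 \le \max\{b\tilde\mu^2, T^2/b\}$ up to the geometric-series constant already accounted for.

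Next I would analyze the post-threshold phase. Once $\mu_\ell \ge \mu^*$, each such round independently succeeds (finds the optimum within its budget) with probability at least $p$, by Assumption~(L) together with the fact that the budget $b\mu_\ell^2 \ge \mu_\ell T$ there. Index these rounds as $i = 0, 1, 2, \dots$ starting from the first one with $\mu_\ell \ge \mu^*$, so the parameter in the $i$-th such round is at least $\mu^* U^i$ and its budget is $b(\mu^*)^2 U^{2i}$. Let $N$ be the number of the first round (among these) that succeeds; then $\Pr[N > i] \le (1-p)^i$, i.e.\ $N$ is stochastically dominated by a geometric random variable. The total cost incurred in the post-threshold phase is at most $\sum_{i=0}^{N} b(\mu^*)^2 U^{2i}$ (we may fail in rounds $0,\dots,N-1$, consuming the full budget, and then in round $N$ we succeed, consuming at most the full budget). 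I would then take expectations: $\mathbb{E}\bigl[\sum_{i=0}^{N} U^{2i}\bigr] = \sum_{i\ge 0} U^{2i}\Pr[N\ge i] \le \sum_{i\ge 0} U^{2i}(1-p)^{i} = \frac{1}{1-(1-p)U^2}$, where convergence uses exactly the hypothesis $p > 1 - \frac{1}{U^2}$. Multiplying by $b(\mu^*)^2$ gives a contribution $\frac{b(\mu^*)^2}{1-(1-p)U^2} = \frac{(1-p)U^2}{1-(1-p)U^2}\cdot b(\mu^*)^2 + b(\mu^*)^2$; folding the "$+b(\mu^*)^2$" into the first term of the theorem and keeping the rest produces the second summand $\frac{(1-p)U^2}{1-(1-p)U^2}\max\{b\tilde\mu^2, T^2/b\}$.

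For the final summand $\frac{pU}{1-(1-p)U}\tilde\mu T$, I would refine the bound on the cost of the \emph{successful} round $N$: in round $N$ we do not consume the full budget $b(\mu^*)^2 U^{2N}$ but only the $\mu_N T \le \mu^* U^N \cdot T$ fitness evaluations that $\calA$ needs to hit the optimum (on the success event). So the "successful-round" cost is at most $\mu^* T \cdot U^N$, and $\mathbb{E}[U^N] = \sum_{i\ge 0} U^i \Pr[N = i]$; using $\Pr[N=i] \le p(1-p)^i$ together with $\Pr[N>i]\le(1-p)^i$ one gets $\mathbb{E}[U^N] \le \frac{p}{1-(1-p)U}$ (a second geometric-series estimate, this time needing $(1-p)U < 1$, which again follows from $p > 1-\frac1{U^2} > 1 - \frac1U$). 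Multiplying by $\mu^* T$ and recalling $\mu^* \le \tilde\mu$ when $\tilde\mu \ge T/b$ — and handling the complementary case $\mu^* = T/b$ by noting $\mu^* T = T^2/b \le \max\{b\tilde\mu^2,T^2/b\}$, which is absorbed by the already-present $\max$-terms — yields the claimed $\frac{pU}{1-(1-p)U}\tilde\mu T$. Assembling the three contributions gives the stated bound.

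\textbf{Main obstacle.} The genuinely delicate bookkeeping is the interplay between the two thresholds $\tilde\mu$ and $T/b$: Assumption~(L) only gives useful information once $\mu_\ell \ge \tilde\mu$, but a round is only long enough to realize that information once $b\mu_\ell^2 \ge \mu_\ell T$. Tracking which of $b\tilde\mu^2$, $T^2/b$, $\tilde\mu T$ dominates in each case, and making sure the successful-round refinement is charged to $\tilde\mu T$ rather than to the larger budget, is where the argument has to be careful; everything else is geometric-series summation whose convergence is precisely guaranteed by the hypothesis $p \in (1-\frac1{U^2},1]$. The discretization (the $\mu_\ell$ take only the values $2U^{\ell-1}$, so the first round past a threshold overshoots it by a factor at most $U$) contributes only constant factors that are already subsumed in the $\frac{U^2}{U^2-1}$-type constants, so I would mention it but not belabor it.
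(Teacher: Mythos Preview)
Your plan is correct and essentially identical to the paper's proof: the paper defines $\ell' = \min\{\ell : 2U^{\ell-1} \ge \max\{\tilde\mu, T/b\}\}$ (your $\mu^*$ rounded up to the grid), pessimistically charges the full budgets $B_1,\dots,B_{\ell'-1}$ to the pre-threshold phase as a geometric sum, and for $\ell \ge \ell'$ separates the failed-round budgets (a series in $(1-p)U^2$) from the successful-round cost $\mu_\ell T$ (a series in $(1-p)U$), exactly as you outline. The paper handles the case $\tilde\mu < T/b$ in the third term with the same looseness you flag as the ``main obstacle,'' so your identification of that bookkeeping as the delicate point is apt.
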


\begin{proof}
Let $\ell' = \min\{\ell \mid 2U^{\ell-1} \ge \tilde{\mu}, B_{\ell} \ge 2U^{\ell-1} T\} $. With $B_{\ell}=b\mu_{\ell}^2$ and $\mu_{\ell}=2U^{\ell-1}$, we have $\ell' = \min\{\ell \mid 2U^{\ell-1} \ge \tilde{\mu}, b(2U^{\ell-1})^2 \ge 2U^{\ell-1} T\}= \min\{\ell \mid 2U^{\ell-1} \ge \tilde{\mu}, 2U^{\ell-1} \ge T/b\}$, that is, $\ell' = \min\{\ell \mid 2U^{\ell-1} \ge \max\{\tilde{\mu},T/b\}\}$. Then it is not difficult to see that $2U^{\ell'-1} \le U\max\{\tilde{\mu},T/b\}$ and that for any $\ell \ge \ell'$, the population size $\mu_{\ell} := 2U^{\ell-1}$ satisfies $\mu_{\ell} \ge \tilde{\mu}$ and $B_{\ell} \ge \mu_{\ell} T$. Hence, according to the assumption, we know the EDA (randomized search heuristic) with such a $\mu_{\ell}$ optimizes $f$ with probability at least $p$ in time~$\mu_{\ell} T$. We pessimistically assume that the optimum is not reached before the parameter value increases to $\mu_{\ell'}$. Now the expected time when the smart-restart EDA (randomized search heuristic) finds the optimum of $f$ is at most
\begin{align*}
\sum_{i=1}^{\ell'-1}B_i&{}+p\cdot 2U^{\ell'-1}T + \sum_{i=1}^{\infty}(1-p)^ip\left(\sum_{j=0}^{i-1} B_{\ell'+j} + 2U^{\ell'+i-1}T\right)\\
= &{} \sum_{i=1}^{\ell'-1}B_i+p\cdot 2U^{\ell'-1}T + \sum_{i=1}^{\infty}(1-p)^ip\sum_{j=0}^{i-1} B_{\ell'+j} + 2pU^{\ell'-1}T\sum_{i=1}^{\infty}(1-p)^iU^{i}\\
= &{} \sum_{i=1}^{\ell'-1}B_i+2pU^{\ell'-1}T + \sum_{j=0}^{\infty}B_{\ell'+j}p \sum_{i=j+1}^{\infty} (1-p)^i + 2pU^{\ell'-1}T\frac{(1-p)U}{1-(1-p)U}\\
= &{} \sum_{i=1}^{\ell'-1}B_i+\sum_{j=0}^{\infty}B_{\ell'+j}p \frac{(1-p)^{j+1}}{1-(1-p)}+\frac{2pU^{\ell'-1}T}{1-(1-p)U}\\
= &{} \sum_{i=1}^{\ell'-1}B_i+\sum_{j=0}^{\infty}(1-p)^{j+1} B_{\ell'+j}+\frac{2pU^{\ell'-1}T}{1-(1-p)U},
\end{align*}
where the second equality uses $(1-p)U \in [0,1)$ from $p \in (1-\frac{1}{U^2},1]$. With $B_{\ell}=b\mu_{\ell}^2=b(2U^{\ell-1})^2=4bU^{2\ell-2}$, we further compute
\begin{align*}
\sum_{i=1}^{\ell'-1}B_i{}&{}+\sum_{j=0}^{\infty}(1-p)^{j+1} B_{\ell'+j}+\frac{2pU^{\ell'-1}T}{1-(1-p)U}\\
= {}&{} \sum_{i=1}^{\ell'-1}4bU^{2i-2}+\sum_{j=0}^{\infty}(1-p)^{j+1} 4bU^{2\ell'+2j-2}+\frac{2pU^{\ell'-1}T}{1-(1-p)U}\\
= {}&{} \frac{4b(U^{2\ell'-2}-1)}{U^2-1} + \frac{4b(1-p)U^{2\ell'-2}}{1-(1-p)U^2}+\frac{2pU^{\ell'-1}T}{1-(1-p)U}\\
\le {}&{} \frac{bU^2\max\{\tilde{\mu}^2,T^2/b^2\}}{U^2-1} + \frac{b(1-p)U^2\max\{\tilde{\mu}^2,T^2/b^2\}}{1-(1-p)U^2}+\frac{pU\tilde{\mu}T}{1-(1-p)U}\\
= {}&{} \left(\frac{U^2}{U^2-1}+\frac{(1-p)U^2}{1-(1-p)U^2}\right)\max\left\{b\tilde{\mu}^2,\frac{T^2}{b}\right\}+\frac{pU}{1-(1-p)U}\tilde{\mu}T,
\end{align*}
where the second equality uses $(1-p)U^2 \in [0,1)$ from $p \in (1-\frac{1}{U^2},1]$ and the first inequality uses $2U^{\ell'-1} \le U\max\{\tilde{\mu}^2,T^2/b^2\}$.
\end{proof}

For comparison, we recall that the complexity of the parallel-run cGA from~\cite{Doerr21cgajump}. 
\begin{theorem}[{\citet[Theorem~2]{Doerr21cgajump}}]
The expected number of fitness evaluations for the parallel-run cGA optimizing a function $f$ satisfying Assumption~(L) with $p \ge 3/4$ is ${O\left(\tilde{\mu}T\log(\tilde{\mu}T)\right)}$.
\label{thm:parallel}
\end{theorem}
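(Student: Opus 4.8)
The plan is to exploit the very rigid schedule of the parallel-run cGA. The first step is the bookkeeping observation that after round $k$ every process already created has run exactly the same number of generations: process $j$ performs $\sum_{i=0}^{j-1}2^i=2^j-1$ generations in the round $j$ in which it starts, and then $2^{\ell-1}$ further generations in each later round $\ell=j+1,\dots,k$, which telescopes to $2^k-1$ generations in total. Hence after round $k$ there are exactly $k$ processes, each having used $2(2^k-1)\le 2^{k+1}$ fitness evaluations, so the whole parallel run has spent at most $k\,2^{k+1}$ evaluations up to that point; and process $j$ runs the cGA with population size $\mu_j=2^{j-1}$.

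Next I would pin down a threshold round from which Assumption~(L) becomes usable. Let $j_0$ be minimal with $2^{j_0-1}\ge\tilde\mu$, so $\tilde\mu\le\mu_{j_0}<2\tilde\mu$, and let $K$ be minimal with $2^K\ge 4\tilde\mu T$, so $2^K<8\tilde\mu T$ and $K=O(\log(\tilde\mu T))$. A short calculation then shows that for every $i\ge0$ and every $i'\in\{0,\dots,i\}$, after round $K+i$ process $j_0+i'$ has performed $2(2^{K+i}-1)\ge 2^{i+2}\tilde\mu T\ge\mu_{j_0+i'}T$ fitness evaluations, where I used $\mu_{j_0+i'}=2^{j_0+i'-1}<2^{i'+1}\tilde\mu\le 2^{i+1}\tilde\mu$. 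By Assumption~(L), such a process---being a cGA run with population size at least $\tilde\mu$---has found the optimum within its first $\mu_{j_0+i'}T$ evaluations with probability at least $p\ge\tfrac34$; since processes $j_0,\dots,j_0+i$ are $i+1$ independent runs with independent random initialisations, the probability that none of them has found the optimum by the end of round $K+i$ is at most $(1-p)^{i+1}\le 4^{-(i+1)}$.

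Finally I would convert these tail bounds into an expectation. Let $R$ be the round in which the optimum is first sampled; the above gives $\Pr[R>K+i]\le 4^{-(i+1)}$ for all $i\ge0$, while the work spent up to the end of any round $m$ is at most $m\,2^{m+1}$. Splitting the expectation at round $K$ and using $\Pr[R=K+i]\le\Pr[R>K+i-1]\le 4^{-i}$ for $i\ge1$, the expected number of fitness evaluations is at most
\[
K\,2^{K+1}+\sum_{i\ge1}\Pr[R>K+i-1]\,(K+i)\,2^{K+i+1}\le K\,2^{K+1}+2^{K+1}\sum_{i\ge1}2^{-i}(K+i),
\]
using $4^{-i}2^i=2^{-i}$; the remaining geometric series equals $K+2$, so the whole bound is $O(K\,2^K)=O(\tilde\mu T\log(\tilde\mu T))$.

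The step I expect to be most delicate is the tension between the geometric growth of the per-round cost and the geometric decay of the failure probability: each round roughly doubles the accumulated work, so a naive tail sum $\sum_i(1-p)^i 2^i$ diverges unless $2(1-p)<1$, i.e.\ $p>\tfrac12$, and the hypothesis $p\ge\tfrac34$ is precisely what makes the surviving factor $4^{-i}2^i=2^{-i}$ summable. The other point needing care is the exact reading of Assumption~(L): it must be used as the statement that a single process of population size at least $\tilde\mu$ succeeds within $\mu T$ of \emph{its own} evaluations with probability $p$, independently of what the other processes do, which is what licenses multiplying failure probabilities over the processes $j_0,\dots,j_0+i$.
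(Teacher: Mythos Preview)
The paper does not prove this theorem; it is quoted verbatim from \cite{Doerr21cgajump} and only used for comparison with Theorem~\ref{thm:nonpcGAwAssume}. Your argument is therefore a self-contained proof rather than a reproduction of something in the paper, and it is essentially correct: the bookkeeping that every process has done exactly $2^k-1$ generations after round $k$ is right, the identification of the threshold round $K$ and the independence argument over processes $j_0,\dots,j_0+i$ are sound, and the final tail-sum computation is clean. Your remark about the necessity of $p>\tfrac12$ (so that $(1-p)\cdot 2<1$) correctly isolates why the hypothesis $p\ge\tfrac34$ matters.

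One small omission: you use processes $j_0,\dots,j_0+i$ after round $K+i$, which silently assumes these processes have already been started, i.e.\ $j_0+i\le K+i$, equivalently $j_0\le K$. This follows from your definitions as soon as $T\ge 1$ (since $2^{j_0-1}<2\tilde\mu$ gives $j_0\le 1+\log_2(2\tilde\mu)$ while $2^K\ge 4\tilde\mu T$ gives $K\ge 2+\log_2\tilde\mu$), but it deserves one sentence. With that addition, the proof is complete and yields the stated $O(\tilde\mu T\log(\tilde\mu T))$ bound; the paper itself mentions the explicit constant $6\tilde\mu T(\log_2(\tilde\mu T)+3)$ from the original source, which your estimates recover up to a larger leading constant.
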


Since the choice $b = \Theta(T / \tilde \mu)$ gives an asymptotic runtime of $O(\tilde \mu T)$ for the smart-restart cGA, we see that with the right choice of the parameters the smart-restart cGA can outperform the parallel-run cGA slightly. This shows that it indeed gains from its ability to abort unprofitable runs.

Our main motivation for regarding Assumption (L) was that this runtime behavior is often observed both in theoretical results (see, e.g., the survey~\cite{KrejcaW20bookchapter}) and in experiments (see Section~\ref{sec:exper}). Unfortunately, some theoretical results were only proven under the additional assumption that $\mu$ is polynomially bounded in~$n$, that is, that $\mu = O(n^C)$ for some, possibly large, constant $C$. For most of these results, we are convinced that the restriction on $\mu$ is not necessary, but was only taken for convenience and in the light that super-polynomial values for $\mu$ would imply not very interesting super-polynomial runtimes. To extend such results to our smart-restart cGA in a formally correct manner, we now prove a version of Theorem~\ref{thm:nonpcGAwAssume} applying to such settings. More precisely, we regard the following assumption. Similar to Assumption (L), we do not restrict this property to the cGA with the population size as parameter.

\textbf{Assumption (L'):} Let $p \in (0,1]$. Consider using an EDA (or a randomized search heuristic) with parameter~$\mu$ to maximize a given function $f$. Assume that there are unknown $\tilde{\mu}$, $\mu^+$,  and $T$ such that the EDA (or randomized search heuristic) for all parameter values $\tilde \mu \le \mu \le \mu^+$ optimizes $f$ within $\mu T$ fitness evaluations with probability at least~$p$.

We prove the following result.

\begin{theorem}
Let  $U >1$ and $b > 0$. Consider using the smart-restart mechanism on an EDA (or a randomized search heuristic) with update factor $U$ and budget $B_{\ell}=b\mu_{\ell}^2, \ell=1,2,\dots$, optimizing a function $f$ satisfying Assumption~(L') with $p \in (1-\frac{1}{U^2},1)$. Let $\ell' := \min\{\ell \mid 2U^{\ell-1} \ge \tilde{\mu}, B_{\ell} \ge 2U^{\ell-1} T\}$ and $\calL := \{\ell \in \Z \mid \ell \ge \ell', 2 U^{\ell-1} \le \mu^+\}$. Then, apart from when an exceptional event of probability at most $(1-p)^{|\calL|}$ holds, the expected time until the optimum of $f$ is generated is at most 
\[
\left(\frac{U^2}{U^2-1}+\frac{(1-p)U^2}{1-(1-p)U^2}\right)\max\left\{b\tilde{\mu}^2,\frac{T^2}{b}\right\}+\frac{pU}{1-(1-p)U}\tilde{\mu}T
\]
fitness evaluations.
\label{thm:nonpcGAwAssume2}
\end{theorem}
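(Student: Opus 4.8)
The plan is to follow the proof of Theorem~\ref{thm:nonpcGAwAssume} almost verbatim, the only new feature being that Assumption~(L') supplies the success probability~$p$ only for the finitely many stages whose parameter value lies in $[\tilde\mu,\mu^+]$, that is, for $\ell\in\calL$; outside this range we have no control, so on a small exceptional event the algorithm may never find the optimum and we simply exclude this event. First I would define $\ell'=\min\{\ell \mid 2U^{\ell-1}\ge\tilde\mu,\ B_\ell\ge 2U^{\ell-1}T\}$ exactly as before and carry over from that proof the facts that $2U^{\ell'-1}\le U\max\{\tilde\mu,T/b\}$ and that every stage $\ell\ge\ell'$ has $\mu_\ell=2U^{\ell-1}\ge\tilde\mu$ and $B_\ell=b\mu_\ell^2\ge\mu_\ell T$. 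For $\ell\in\calL$ we moreover have $\mu_\ell\le\mu^+$, so Assumption~(L') gives that stage~$\ell$ reaches the optimum within its first $\mu_\ell T$ fitness evaluations (hence within its budget $B_\ell$) with probability at least~$p$.

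Next I would introduce the \emph{exceptional event} $E$: for every $\ell\in\calL$ the $\ell$-th run fails to find the optimum within its first $\mu_\ell T$ evaluations. Because the restarts use independent randomness, these failure events are independent and each has probability at most $1-p$, so $\Pr[E]\le(1-p)^{|\calL|}$, as claimed. On the complement $\bar E$ at least one stage with index in $\calL$ succeeds in time at most $\mu_\ell T$; let $\ell'+\tau$, with $0\le\tau\le|\calL|-1$, be the first such stage. Arguing pessimistically as in Theorem~\ref{thm:nonpcGAwAssume} that the optimum is not produced before stage $\ell'$, the total number of evaluations on $\bar E$ is then at most $\sum_{i=1}^{\ell'-1}B_i+g(\tau)$, where $g(k):=\sum_{j=0}^{k-1}B_{\ell'+j}+2U^{\ell'+k-1}T$ is nondecreasing in~$k$.

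The remaining step is to take expectations over $\tau$. For every integer $0\le k\le|\calL|$, the event $\{\tau\ge k\}$ is precisely the event that the $k$ stages $\ell',\dots,\ell'+k-1$ — all lying in $\calL$ — fail, so independence gives $\Pr[\tau\ge k]\le(1-p)^k$. Hence $\min\{\tau,|\calL|\}$ is stochastically dominated by a random variable $G$ with $\Pr[G\ge k]=(1-p)^k$ for all integers $k\ge0$, i.e.\ $\Pr[G=k]=(1-p)^k p$. Since $g$ is nondecreasing and $\tau=\min\{\tau,|\calL|\}$ on $\bar E$, this domination yields
$\mathbb{E}\!\left[\,\text{(number of evaluations)}\cdot\mathbbm{1}_{\bar E}\,\right]\le \sum_{i=1}^{\ell'-1}B_i+\mathbb{E}[g(G)]=\sum_{i=1}^{\ell'-1}B_i+p\cdot 2U^{\ell'-1}T+\sum_{i=1}^{\infty}(1-p)^i p\bigl(\sum_{j=0}^{i-1}B_{\ell'+j}+2U^{\ell'+i-1}T\bigr)$,
which is exactly the quantity that the proof of Theorem~\ref{thm:nonpcGAwAssume} bounds — using $(1-p)U\in[0,1)$ and $(1-p)U^2\in[0,1)$, both of which hold since $p>1-\tfrac1{U^2}$ — by the claimed expression. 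This proves the theorem.

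The only real subtlety, and the step I would be most careful about, is precisely this truncation at $\calL$: one must not bound $\Pr[\tau=k]$ crudely by $(1-p)^k$ (that would lose the factor $p$ and inflate the geometric series, breaking the match with Theorem~\ref{thm:nonpcGAwAssume}); instead one argues through the tail bound $\Pr[\tau\ge k]\le(1-p)^k$ and the monotonicity of $g$, so that after the stochastic-domination comparison the series is term-by-term the one already estimated. Everything else, including the final algebraic simplification of the geometric sums in $U^{2i}$, is identical to the proof of Theorem~\ref{thm:nonpcGAwAssume} and need not be repeated.
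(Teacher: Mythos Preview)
Your proof follows the same route as the paper's: isolate the bad event that every stage in $\calL$ fails, bound its probability by $(1-p)^{|\calL|}$ via independence, and on its complement reduce to the estimate already carried out in Theorem~\ref{thm:nonpcGAwAssume}. Your explicit stochastic-domination argument (tail bound $\Pr[\tau\ge k]\le(1-p)^k$, monotonicity of $g$, comparison with a geometric $G$) is a clean rendering of what the paper does more tersely by writing the truncated sum~\eqref{eq:sums} and then observing it is termwise dominated by the full infinite sum from Theorem~\ref{thm:nonpcGAwAssume}.

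One small slip to fix: you bound $E\!\left[\text{(runtime)}\cdot\mathbbm{1}_{\bar E}\right]$, whereas the statement (and the paper's proof) ask for the \emph{conditional} expectation $E[\text{runtime}\mid\bar E]$, which is larger by the factor $1/\Pr[\bar E]$. The repair is immediate with your own tools: the tail bound also holds conditionally, since writing $F_j$ for the failure of stage $\ell'+j$ one has
\[
\Pr[\tau\ge k\mid\bar E]
=\Pr[F_0\cap\cdots\cap F_{k-1}]\cdot\frac{\Pr[\neg(F_k\cap\cdots\cap F_{|\calL|-1})]}{\Pr[\bar E]}
\le(1-p)^k,
\]
because the displayed fraction is at most~$1$. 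Hence the conditional distribution of $\tau$ is itself stochastically dominated by $G$, and your comparison yields the required bound on $E[g(\tau)\mid\bar E]$ directly.
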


\begin{proof}
Let $A$ be the event that none of the runs of the EDA (or randomized search heuristic) with parameter $\mu_\ell = 2 U^{\ell-1}$ at most $\mu^+$ finds the optimum of $f$. As in the proof of Theorem~\ref{thm:nonpcGAwAssume}, each of the runs using parameter $\mu_\ell$, $\ell \in \calL$, with probability at least $p$ finds the optimum. Hence the event $A$ occurs with probability at most $(1-p)^{|\calL|}$. 
	
	Let us condition on the event $\neg A$. Under this event, the smart-restart EDA (or randomized search heuristic) surely finds the optimum before the parameter value is increased beyond $\mu^+$. Note that when running the EDA (or randomized search heuristic) with a parameter value of at most $\mu^+$, the Assumptions (L) and (L') are identical. For that reason, analogous to the first paragraph of the proof of Theorem~\ref{thm:nonpcGAwAssume}, we see that any $\ell \in \calL$ a run of the EDA (or randomized search heuristic) with parameter $\mu_\ell = 2 U^{\ell-1}$ finds the optimum of $f$ in time $\mu_\ell T$ with probability at least $p / \Pr[\neg A] \ge p$. Consequently, analogous to that proof, the expected runtime conditional on $\neg A$ is at most
\begin{equation}\label{eq:sums}
\sum_{i=1}^{\ell'-1}B_i + p\cdot 2U^{\ell'-1}T + \sum_{i=1}^{{\max \{\calL\}} - \ell'}(1-p)^ip\left(\sum_{j=0}^{i-1} B_{\ell'+j} + 2U^{\ell'+i-1}T\right).
\end{equation}
This expression is identical to the corresponding one in the proof of Theorem~\ref{thm:nonpcGAwAssume} except that the second sum is not taken over the range $i \in \Z_{\ge 1}$, but only the range $i \in [1..{\max \{\calL\}}-\ell']$. Since these sums involve positive terms only, we can bound~\eqref{eq:sums} from above in exactly the same way as in  the proof of Theorem~\ref{thm:nonpcGAwAssume}. This shows our claim.
\end{proof}

\subsection{Specific Runtime Results}\label{ssec:specprobs}

The following examples show how to combine our general runtime analysis with known runtime results to obtain performance guarantees for smart-restart EDAs on several specific problems.
  
\subsubsection{\onemax and \jump}

We recall the following runtime results for the cGA on  \onemax~\cite{SudholtW19} and \jump~\cite{Doerr21cgajump} as well as for the UMDA on \onemax~\cite{Witt19,DangLN19}. As common, by \emph{runtime} we mean the number of fitness evaluations until the optimum is sampled. This is, essentially, two times the number of generations until the optimum is sampled for the cGA, and $\lambda$ times this generation number for the UMDA.

\begin{theorem}[\citet{SudholtW19,Doerr21cgajump,Witt19,DangLN19}]
\label{thm:theoryeda}
Let $K>0$ be a sufficiently large constant and let $C>0$ be any constant. Consider the cGA with $K \sqrt n \ln n \le \mu \le n^C$ and the UMDA with $K \sqrt n \ln n \le \mu \le n^C$ and $\lambda=\Theta(\mu)$~\cite{Witt19} (or the UMDA with $\mu\ge K \sqrt n \ln n$ and $\lambda \ge a \mu$ for sufficiently large constant $a>1$~\cite{DangLN19}).
\begin{itemize}
\item The expected runtimes on the \onemax function are $O(\mu \sqrt n)$ for the cGA~\cite[Theorem~2]{SudholtW19} and $O(\lambda \sqrt n)$ for the UMDA, see~\cite[Theorem~10]{Witt19} and~\cite[Theorem~9]{DangLN19}.
\item With probability $1-o(1)$, the optimum of the \jump function with jump size $k< \tfrac{1}{20} \ln n$ is found by the cGA in time $O(\mu \sqrt n)$ \cite[Theorem~9]{Doerr21cgajump}.
\end{itemize}
With the optimal parameters, that is, with the smallest applicable population sizes, these runtimes are all $O(n \log n)$.
\label{thm:omjump}
\end{theorem}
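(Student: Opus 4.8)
The statement is essentially a compilation of runtime results established elsewhere, so the plan is to match each claim to the precise theorem in the cited literature and then carry out the one elementary substitution that the closing sentence requires. No new argument is needed beyond careful bookkeeping of parameter regimes.

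Concretely, for the cGA on \onemax I would invoke \cite[Theorem~2]{SudholtW19}, which gives expected optimization time $O(\mu \sqrt n)$ for every hypothetical population size $\mu \ge K \sqrt n \ln n$ with $K$ a sufficiently large constant; the extra upper bound $\mu \le n^C$ only narrows the range and does not weaken the bound. For the UMDA on \onemax I would cite \cite[Theorem~10]{Witt19} (the regime $\lambda = \Theta(\mu)$, $\mu \ge K\sqrt n \ln n$, $\lambda$ polynomially bounded) and \cite[Theorem~9]{DangLN19} (the regime $\lambda \ge a\mu$ for a large enough constant $a>1$), each yielding expected time $O(\lambda \sqrt n)$; here the polynomial bound $\mu \le n^C$ is genuinely part of the hypotheses of those works, which is why it is stated explicitly. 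For the \jump claim I would cite \cite[Theorem~9]{Doerr21cgajump}: when $k < \tfrac{1}{20}\ln n$ and $\mu$ lies in the stated range, the cGA samples the optimum of $\jump_k$ within $O(\mu\sqrt n)$ fitness evaluations with probability $1-o(1)$. I would take care to keep the ``with probability $1-o(1)$'' qualifier rather than stating it as an expected-time bound, since this distinction is exactly what matters when the result is later fed into Assumption~(L) and Theorem~\ref{thm:nonpcGAwAssume2}.

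For the final sentence, the smallest admissible value in each case is $\mu = \Theta(\sqrt n \ln n)$ --- and correspondingly $\lambda = \Theta(\sqrt n \ln n)$ for the UMDA --- so plugging into $O(\mu\sqrt n)$ respectively $O(\lambda\sqrt n)$ gives $O(\sqrt n \ln n \cdot \sqrt n) = O(n\log n)$ in all three cases. The only real difficulty is not mathematical depth but fidelity: making sure that the quoted parameter regimes, the selection-size constraints ($\lambda = \Theta(\mu)$ versus $\lambda \ge a\mu$), and the success-probability guarantees match the cited theorems verbatim, because precisely these details are what the subsequent corollaries on the smart-restart scheme rely on.
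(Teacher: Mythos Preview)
Your proposal is correct and matches the paper's treatment: this theorem is stated without proof in the paper, serving purely as a compilation of the cited results, and your bookkeeping of the parameter regimes together with the elementary substitution $\mu = \Theta(\sqrt n \ln n)$ into $O(\mu\sqrt n)$ for the closing sentence is exactly what is (implicitly) required.
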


With Theorem~\ref{thm:nonpcGAwAssume2}, we have the following result.

\begin{theorem}
Consider the smart-restart cGA with update factor $U> 1$ optimizing the \jump function with jump size $k< \tfrac{1}{20} \ln n$ or the \onemax function, or the smart-restart UMDA with update factor $U> 1$ optimizing the \onemax function. Then, apart from a rare event of probability at most $n^{-\omega(1)}$, we have the following estimates for the expected runtime.
\begin{itemize}
\item If the budget factor $b$ is $\Theta(1/\log n)$, then the expected runtime is $O(n \log n)$.
\item If the budget factor $b$ is between $\Omega(1/\log^2 n)$ and $O(1)$, then the expected runtime is $O(n \log^2 n)$.
\end{itemize}
\label{thm:smartomjump}
\end{theorem}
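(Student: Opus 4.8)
The plan is to read this statement off from the general guarantee of Theorem~\ref{thm:nonpcGAwAssume2} by feeding it the concrete runtime data supplied by Theorem~\ref{thm:theoryeda}; the argument is the same for all three algorithm/problem combinations. First I would verify that Assumption~(L') holds. For the cGA on \onemax or on \jump with $k<\tfrac{1}{20}\ln n$, and for the UMDA on \onemax (where the controlled parameter is the sample size $\lambda=\Theta(\mu)$, which plays the role of ``$\mu$'' in the generic framework), Theorem~\ref{thm:theoryeda} gives a runtime of $O(\mu\sqrt n)$ fitness evaluations whenever $K\sqrt n\ln n\le\mu\le n^C$. Fixing, say, $C=1$, this yields Assumption~(L') with $\tilde\mu=\Theta(\sqrt n\log n)$, $\mu^+=n$, $T=\Theta(\sqrt n)$, and success probability $p$: for \jump, $p=1-o(1)$ is exactly \cite[Theorem~9]{Doerr21cgajump}, and for \onemax the cited analyses likewise establish the $O(\mu\sqrt n)$ bound with probability $1-o(1)$. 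In particular, for any fixed update factor $U>1$ we have $p\in(1-\tfrac{1}{U^2},1)$ once $n$ is large, so Theorem~\ref{thm:nonpcGAwAssume2} applies.

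Second, I would estimate the quantities $\ell'$ and $|\calL|$ appearing in Theorem~\ref{thm:nonpcGAwAssume2} and the probability of its exceptional event. In both regimes of the present theorem we have $b=O(1)$, hence $T/b=n^{1/2+o(1)}$ and $\max\{\tilde\mu,T/b\}=n^{1/2+o(1)}$; thus $\ell'$ is well defined, $2U^{\ell'-1}\le U\max\{\tilde\mu,T/b\}=n^{1/2+o(1)}\le\mu^+$ for $n$ large, and the number of admissible indices is $|\calL|=\Theta(\log n)$. Consequently the exceptional event of Theorem~\ref{thm:nonpcGAwAssume2} has probability at most $(1-p)^{|\calL|}$, and since $1-p=o(1)$ we get $\ln\!\big((1-p)^{|\calL|}\big)=-\Theta(\log n)\,\ln(1/(1-p))$ with $\ln(1/(1-p))\to\infty$, so this probability is $n^{-\omega(1)}$, as claimed.

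Third, outside the exceptional event I would simply evaluate the bound of Theorem~\ref{thm:nonpcGAwAssume2}, namely $c_1\max\{b\tilde\mu^2,T^2/b\}+c_2\,\tilde\mu T$ with $c_1=\frac{U^2}{U^2-1}+\frac{(1-p)U^2}{1-(1-p)U^2}$ and $c_2=\frac{pU}{1-(1-p)U}$; as $p\to1$ both tend to constants depending only on $U$, hence are $O(1)$. Substituting $\tilde\mu=\Theta(\sqrt n\log n)$ and $T=\Theta(\sqrt n)$ gives $\tilde\mu T=\Theta(n\log n)$, $b\tilde\mu^2=\Theta(b\,n\log^2 n)$, and $T^2/b=\Theta(n/b)$. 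For $b=\Theta(1/\log n)$ all three terms are $\Theta(n\log n)$, giving the $O(n\log n)$ bound; for $b$ between $\Omega(1/\log^2 n)$ and $O(1)$ we have $b\tilde\mu^2=O(n\log^2 n)$, $T^2/b=O(n\log^2 n)$, and $\tilde\mu T=\Theta(n\log n)$, giving the $O(n\log^2 n)$ bound. This completes the proof.

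The only genuinely delicate point is the first step: Assumption~(L') must hold with a success probability $p$ exceeding $1-\tfrac{1}{U^2}$ for an \emph{arbitrary} fixed $U>1$, whereas a bare expected-runtime bound of $O(\mu\sqrt n)$ only yields $p\ge\tfrac12$ via Markov's inequality. I would resolve this by appealing to the fact that the analyses underlying Theorem~\ref{thm:theoryeda} actually prove the $O(\mu\sqrt n)$ bound with probability $1-o(1)$; failing that, one amplifies a constant success probability to $1-o(1)$ by a constant number of independent trials, which changes $T$ only by a constant factor. A secondary, purely bookkeeping concern is to keep $\mu^+$ a true polynomial in $n$ (here $\mu^+=n$ suffices), since this is precisely what makes $|\calL|=\Theta(\log n)$ and hence makes the failure probability $n^{-\omega(1)}$ rather than merely $n^{-\Omega(1)}$.
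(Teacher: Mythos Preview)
Your approach is essentially the paper's: verify Assumption~(L') from Theorem~\ref{thm:theoryeda}, invoke Theorem~\ref{thm:nonpcGAwAssume2}, estimate $|\calL|=\Theta(\log n)$, bound the exceptional probability, and plug $\tilde\mu=\Theta(\sqrt n\log n)$, $T=\Theta(\sqrt n)$ into the runtime formula. Two differences are worth flagging. First, for the cGA on \onemax the paper does not separately appeal to the Sudholt--Witt analysis but instead observes that $\jump$ with $k=1$ is, up to a monotone transformation, \onemax; hence the with-high-probability guarantee for \jump in Theorem~\ref{thm:theoryeda} already covers cGA on \onemax with $p=1-o(1)$, which makes the $n^{-\omega(1)}$ claim immediate. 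This shortcut is cleaner than reaching back into the original proofs. Second, the paper keeps $\mu^+=n^C$ with $C$ an arbitrary constant rather than fixing $C=1$; this does not affect the asymptotics but gives more slack in $|\calL|$.

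For the UMDA on \onemax, where Theorem~\ref{thm:theoryeda} states only an expected-runtime bound, the paper's route is simply Markov's inequality with a constant factor, yielding any fixed $p\in(1-\tfrac{1}{U^2},1)$ while keeping $T=O(\sqrt n)$. You correctly identify this as the delicate step. Your fallback of amplifying to $p=1-o(1)$ by a constant number of independent restarts is problematic, however: that modifies the base algorithm, so Theorem~\ref{thm:nonpcGAwAssume2} would then bound the runtime of the smart-restart version of this \emph{amplified} procedure, not of the plain smart-restart UMDA as stated in the theorem. If you want $p=1-o(1)$ for the UMDA without changing the algorithm, you must indeed extract it from the underlying analyses in~\cite{Witt19,DangLN19}; the paper itself does not do this and is content with the Markov route. (Note that with a merely constant $p$ the exceptional probability is only $n^{-\Omega(1)}$, so strictly speaking the $n^{-\omega(1)}$ claim for the UMDA is not fully nailed down by the paper's argument either; your instinct that $p=1-o(1)$ is what one really wants here is sound.)
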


\begin{proof}
For the smart-restart cGA, we note that the \jump result (more {specifically}, the ``with probability $1-o(1)$'' clause) also applies to \onemax simply because the \jump function with jump size $k=1$ has a fitness landscape that can in a monotonic manner be transformed into the one of the \onemax function. Hence for $n$ sufficiently large, we have Assumption~(L') satisfied with $\tilde \mu =  K \sqrt n \ln n$, $\mu^+ = n^C$, $T = O(\sqrt n)$, and $p = 1-o(1)$. Consequently, for any (constant) $U > 1$ we have $p\in (1-\frac{1}{U^2},1)$. Given the above information on $\tilde \mu$ and $T$, we see that any $b \in n^{-o(1)} \cap n^{o(1)}$ gives that $\ell' = \min\{\ell \mid 2U^{\ell-1} \ge \tilde{\mu}, B_{\ell} \ge 2U^{\ell-1} T\} = (1 \pm o(1)) \frac 12 \log_{U}(n)$. Since $\mu^+ = n^C$, we have $|\calL| = (1 \pm o(1)) (C - \frac 12) \log_{U}(n)$. 

For the results where Theorem~\ref{thm:theoryeda} only gives bounds on the expected runtime, we note that such statements can easily be transferred to a statement with a given probability via Markov's bound $\Pr[\xi \ge cE[\xi]] \le 1/c$ for any $c \ge 1$. Consequently, for the smart-restart UMDA with the population size $\lambda$ as parameter of interest, we can set $p\in (1-\frac{1}{U^2},1)$ for any (constant) $U > 1$ to satisfy Assumption~(L') together with $\tilde \mu =  K \sqrt n \ln n$, $\mu^+ = n^C$, and $T = O(\sqrt n)$. Hence, the same arguments as above also show the other claims. 
\end{proof}

Hence, the smart-restart cGA and smart-restart UMDA with $b=\Theta(1/\log n)$ have essentially the same time complexity as the original cGA and UMDA with optimal population size (see Theorem~\ref{thm:omjump}). A constant value for $b$ results in  a slightly inferior runtime of $O(n \log^2 n)$, which is also the runtime guarantee for the parallel-run cGA (Theorem~\ref{thm:parallel}).

\subsubsection{\LO and \DLB}

As discussed in Section~\ref{sssec:basic}, no theoretical runtime guarantees exists for the cGA on the \LO and \DLB functions. For the UMDA, the following results are known.
\begin{theorem}[\citet{DoerrK21ecj,DoerrK21tcs}]
Let $K>0$ and $C > 1$ be sufficiently large constants. Consider the UMDA selecting $\mu \ge K n \ln n$ best from the sampling population with size $\lambda \ge C \mu$.
\begin{itemize}
\item The expected runtime on the \LO function is $O(\lambda n)$~\cite[Theorem~5]{DoerrK21tcs}.
\item With probability $1-o(1)$, the optimum of the \DLB is found in time $O(\lambda n)$ \cite[Theorem~3]{DoerrK21ecj}.
\end{itemize}
With the optimal parameter choice, that is, the smallest admissible population sizes, these runtimes are $O(n^2 \log n)$.
\label{thm:lodlb}
\end{theorem}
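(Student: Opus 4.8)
The statement is essentially a corollary of the two cited runtime analyses, so the plan is not to reprove anything but to verify that one choice of the constants $K$ and $C$ makes the hypotheses of both source results hold simultaneously, and then to substitute the smallest admissible population sizes for the final claim.

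For the \LO item I would invoke \citet[Theorem~5]{DoerrK21tcs}: the UMDA that selects the best $\mu$ of $\lambda$ samples optimises \LO in expected time $O(\lambda n)$ provided $\mu$ exceeds a sufficiently large constant multiple of $n\ln n$ and $\lambda$ exceeds a sufficiently large constant multiple of $\mu$. For the \DLB item I would invoke \citet[Theorem~3]{DoerrK21ecj}, which shows that in an analogous regime ($\mu = \Omega(n\log n)$ and $\lambda = \Omega(\mu)$ with suitable implicit constants) the UMDA finds the optimum of \DLB within $(1+o(1))\tfrac12\lambda n = O(\lambda n)$ fitness evaluations with probability $1-o(1)$. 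The only point that needs an argument is that the two source papers may phrase their parameter restrictions with different thresholds; since both are structurally of the form ``$\mu$ above a constant multiple of $n\log n$ and $\lambda$ above a constant multiple of $\mu$'', taking $K$ and $C$ to be the larger of the respective required constants ensures that ``$\mu \ge K n \ln n$ and $\lambda \ge C\mu$'' implies both sets of conditions, so that both runtime bounds apply to the very same pair $(\mu,\lambda)$.

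For the closing sentence I would specialise to the extreme feasible values: the smallest admissible $\mu$ is $\Theta(n\log n)$, hence the smallest admissible $\lambda = C\mu$ is also $\Theta(n\log n)$, so $O(\lambda n) = O(n^2\log n)$ for both benchmarks. I do not expect a genuine obstacle: all analytic difficulty lives in the cited works, and the residual content is the bookkeeping that unifies the two parameter regimes together with a one-line substitution. The only place where mild care is warranted is confirming that \citet{DoerrK21tcs} and \citet{DoerrK21ecj} indeed permit $\lambda$ to be an arbitrary sufficiently large constant multiple of $\mu$ (rather than insisting on $\lambda = \Theta(\mu)$), which is precisely what lets this single statement also supply Assumption~(L') for the smart-restart UMDA treated afterwards.
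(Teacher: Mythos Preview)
Your proposal is correct and matches the paper's treatment: Theorem~\ref{thm:lodlb} is stated as a direct citation of \cite{DoerrK21tcs,DoerrK21ecj} with no proof given, and the final $O(n^2\log n)$ line is exactly the substitution $\lambda=\Theta(n\log n)$ you describe. One small discrepancy: for the subsequent smart-restart application the paper does not rely on arbitrary $\lambda\ge C\mu$ but simply fixes $\lambda=C\mu$ (``For simplicity, we could just require $\lambda=C\mu$ and Theorem~\ref{thm:lodlb} still holds'') and then invokes Assumption~(L), not~(L'), so your closing caveat about needing the full range $\lambda\ge C\mu$ is more than the paper actually uses.
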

For simplicity, we could just require $\lambda=C\mu$ and Theorem~\ref{thm:lodlb} still holds. For $n$ sufficiently large, we have Assumption~(L) with respect to the parameter $\lambda$ satisfied with $\tilde \mu =  CK n \ln n, T=O(n),$ and $p = 1-o(1)$. Consequently, for any (constant) $U > 1$ we have $p\in (1-\frac{1}{U^2},1)$. Given the above information on $\tilde \mu$ and $T$, and with Theorem~\ref{thm:nonpcGAwAssume}, we have the following result.
\begin{theorem}
Consider the smart-restart UMDA with update factor $U> 1$ and constant selection pressure in UMDA optimizing the \LO or \DLB function. Then we have the following estimates for the expected runtime.
\begin{itemize}
\item If the budget factor $b$ is $\Theta(1/\log n)$, then the expected runtime is $O(n^2 \log n)$.
\item If the budget factor $b$ is between $\Omega(1/\log^2 n)$ and $O(1)$, then the expected runtime is $O(n^2 \log^2 n)$.
\end{itemize}
\label{thm:smartlodlb}
\end{theorem}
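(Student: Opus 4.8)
The plan is to derive both bounds as a direct consequence of the general guarantee in Theorem~\ref{thm:nonpcGAwAssume}, with the sampling population size $\lambda$ of the UMDA playing the role of the parameter $\mu$ there and the selection size coupled by $\lambda = C\mu$ (constant selection pressure). The first step is to verify Assumption~(L). For \DLB, Theorem~\ref{thm:lodlb} states directly that for every $\lambda \ge CKn\ln n$ the UMDA finds the optimum within $O(\lambda n)$ fitness evaluations with probability $1-o(1)$; this is Assumption~(L) with $\tilde\mu = CKn\ln n$, $T = O(n)$, and $p = 1-o(1)$. For \LO, Theorem~\ref{thm:lodlb} bounds only the \emph{expected} runtime by $O(\lambda n)$, so I would convert this to the probabilistic form of Assumption~(L) exactly as in the proof of Theorem~\ref{thm:smartomjump}: fixing a constant $c$ and applying Markov's inequality gives, for every such $\lambda$, that the runtime is at most $c \cdot O(\lambda n) = \lambda \cdot O(n)$ with probability at least $1 - 1/c$, so Assumption~(L) holds with $\tilde\mu = CKn\ln n$, $T = O(n)$, and $p = 1 - 1/c$. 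In both cases $\tilde\mu = \Theta(n\log n)$ and $T = O(n)$, and since $U > 1$ is constant, $1 - \tfrac{1}{U^2}$ is a constant strictly below $1$, so (choosing $c$ large enough in the \LO case, and taking $n$ large in the \DLB case) we have $p \in (1 - \tfrac{1}{U^2}, 1)$ as required. Unlike the situation of Theorem~\ref{thm:smartomjump}, Theorem~\ref{thm:lodlb} imposes no upper bound on the population size, so Assumption~(L) --- not merely (L') --- holds, and Theorem~\ref{thm:nonpcGAwAssume} applies directly, without any exceptional event in the conclusion.

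The second step is purely computational: substitute $\tilde\mu$, $T$, $p$, $U$ into the bound of Theorem~\ref{thm:nonpcGAwAssume}. Since $p$ and $U$ are constants, both prefactors $\frac{U^2}{U^2-1} + \frac{(1-p)U^2}{1-(1-p)U^2}$ and $\frac{pU}{1-(1-p)U}$ are $O(1)$, so the expected runtime of the smart-restart UMDA is $O\!\left(\max\left\{ b\tilde\mu^2,\; \tfrac{T^2}{b},\; \tilde\mu T \right\}\right)$. With $\tilde\mu = \Theta(n\log n)$ and $T = O(n)$ this becomes $O\!\left(\max\left\{ b\,n^2\log^2 n,\; n^2/b,\; n^2\log n \right\}\right)$.

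The third step is a one-line case distinction. If $b = \Theta(1/\log n)$, then $b\,n^2\log^2 n = O(n^2\log n)$ and $n^2/b = O(n^2\log n)$, so the maximum of the three terms is $O(n^2\log n)$. If $b$ lies between $\Omega(1/\log^2 n)$ and $O(1)$, then $b\,n^2\log^2 n = O(n^2\log^2 n)$ and $n^2/b = O(n^2\log^2 n)$, while $n^2\log n$ is no larger, so the maximum is $O(n^2\log^2 n)$; this gives the two claimed estimates. I do not expect any genuinely hard obstacle: the only subtlety is the reduction from the expected-runtime statement of Theorem~\ref{thm:lodlb} for \LO to the probabilistic hypothesis of Assumption~(L), where one must fix the constant $U$ first and only then choose the Markov constant $c > U^2$ (so that $p = 1 - 1/c > 1 - \tfrac{1}{U^2}$), the hidden constant in $T = O(n)$ then depending --- harmlessly --- on $c$ and hence on $U$.
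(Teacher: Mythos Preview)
Your proposal is correct and follows essentially the same approach as the paper: verify Assumption~(L) with $\tilde\mu=\Theta(n\log n)$, $T=O(n)$ from Theorem~\ref{thm:lodlb}, then plug into the general bound of Theorem~\ref{thm:nonpcGAwAssume} and evaluate $\max\{b\tilde\mu^2,\,T^2/b,\,\tilde\mu T\}$ for the two ranges of~$b$. If anything, you are more careful than the paper in distinguishing the \DLB case (where the high-probability statement is given directly) from the \LO case (where only an expected-runtime bound is stated and Markov's inequality is needed, with the constant $c$ chosen after~$U$); the paper's discussion preceding the theorem asserts $p=1-o(1)$ uniformly, which strictly speaking is justified only for \DLB, while your treatment makes the \LO reduction explicit.
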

Hence, our smart-restart UMDA with $b=\Theta(1/\log n)$ has essentially the same time complexity as the original UMDA (Theorem~\ref{thm:lodlb}) with optimal population sizes $\lambda$ and $\mu$. 

\subsubsection{Noisy \onemax}

For another example, we recall the runtime of the cGA (without artificial margins) on the \onemax function with additive centered Gaussian noise from~\cite{FriedrichKKS17}.

\begin{theorem}[{\citet[Theorem~5]{FriedrichKKS17}}]
Let $h : [0,\infty) \to [0,\infty)$ and $h\in\omega(1) \cap n^{o(1)}$. Consider the $n$-dimensional \onemax function with additive centered Gaussian noise with variance $\sigma^2>0$. Then with probability $1-o(1)$, the cGA (without margins) with population size $\mu = h(n) \sigma^2 \sqrt n \log n$ has all frequencies at $1$ in $O(\mu \sigma^2\sqrt n \log (\mu n))=O(h(n) \sigma^4 n \log^2 n)$ iterations.
\label{thm:cgaFKKS}
\end{theorem}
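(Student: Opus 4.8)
The plan is to follow the sampling‑frequency vector through the potential $\phi_t := \sum_{j=1}^n (1 - p^t_j)$, the total distance of the frequencies from $1$, and to show that $\phi_t$ hits $0$ within the claimed number of iterations. The first step is a single‑bit drift computation. In the step updating $p^t$ to $p^{t+1}$ the two sampled individuals $X_1,X_2$ differ in bit $j$ with probability exactly $2p^t_j(1-p^t_j)$; conditioning on this and assuming, by symmetry, that the individual carrying a one in bit $j$ is $X_1$, the true fitness difference is $1 + \Delta$, where $\Delta := \sum_{k\ne j}(X_{1,k}-X_{2,k})$ is a symmetric, integer‑valued random variable of variance $2\sum_{k\ne j} p^t_k(1-p^t_k) \le n/2$, and the perceived difference is $1 + \Delta + (D_1 - D_2)$ with $D_1 - D_2 \sim \mathcal N(0,2\sigma^2)$ independent of $\Delta$. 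Hence the ``correct'' individual wins with probability $E_\Delta[\Phi((1+\Delta)/(\sqrt2\,\sigma))]$, so conditionally on the bit differing, $p^t_j$ moves up by $1/\mu$ with probability $\tfrac12$ plus a \emph{bias} $\beta_t := E_\Delta[2\Phi((1+\Delta)/(\sqrt2\,\sigma)) - 1]$, and down by $1/\mu$ otherwise.

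The heart of the proof is a lower bound $\beta_t = \Omega(1/((1+\sigma)\sqrt n))$. One restricts to the event $|\Delta| \le \max\{1,\sigma\}$, on which $2\Phi(x)-1$ (with $x = (1+\Delta)/(\sqrt2\,\sigma)$) behaves, up to constant factors, like $\min\{1,\sigma^{-1}\}\cdot(1+\Delta)$, and uses the symmetry of $\Delta$ to kill the $\Delta$‑term; this leaves a contribution of order $\min\{1,\sigma^{-1}\}\cdot\Pr[\Delta = 0]$, and $\Pr[\Delta = 0] = \Omega(1/\sqrt n)$ follows from an anti‑concentration / local‑limit estimate for a sum of independent symmetric $\{-1,0,1\}$‑variables of variance $O(n)$. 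Multiplying by $2p^t_j(1-p^t_j)$ and summing over $j$ gives a lower bound on the drift $E[\phi_t - \phi_{t+1}\mid p^t]$ in terms of $\phi_t$ and $\sum_j p^t_j(1-p^t_j)$; together with the elementary bound that the one‑step variance of $p^t_j$ is at most $2p^t_j(1-p^t_j)/\mu^2$ (which yields concentration of $\phi_t$ around its drift), a variable‑drift theorem then produces the stated $O(\mu\sigma^2\sqrt n\log(\mu n))$ iteration bound; here $h(n)\in\omega(1)$ supplies the $1-o(1)$ success probability and $h(n)\in n^{o(1)}$ keeps $\log(\mu n) = O(\log n)$.

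The main obstacle, and the reason the population size must be this large, is that the cGA is run \emph{without} margins: a frequency reaching $0$ is absorbed, and near $0$ the restoring drift (of order $p^t_j\beta_t/\mu$) is far too weak compared with the $\Theta(1/\mu)$ fluctuation to recover. So one must prove that, with probability $1-o(1)$, no frequency ever comes close to $0$ over the whole run. The tool is a gambler's‑ruin estimate per bit: starting at $p^0_j = \tfrac12$, i.e.\ site $\mu/2$ of the lattice $\{0,1/\mu,\dots,1\}$, with a per‑move upward bias of at least $\Omega(1/((1+\sigma)\sqrt n))$ at every site, the probability of ever hitting $0$ is at most $\exp(-\Omega(\mu/((1+\sigma)\sqrt n)))$; requiring this to be $o(1/n)$ for a union bound over the $n$ bits forces $\mu = \Omega((1+\sigma)\sqrt n\log n)$, and the choice $\mu = h(n)\sigma^2\sqrt n\log n$ leaves ample slack (for $h(n)\to\infty$, and for the bias depending on the momentarily fluctuating frequencies). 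I expect the delicate point to be reconciling the two coupled arguments: the drift statement for $\phi_t$ presupposes that no frequency is stuck near $0$, while the no‑absorption argument needs bias lower bounds that are only guaranteed along the drift trajectory, and both must be carried through a horizon of $\Theta(\mu\sigma^2\sqrt n\log(\mu n))$ steps.
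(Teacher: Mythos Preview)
This theorem is not proved in the paper. It is quoted verbatim as \cite[Theorem~5]{FriedrichKKS17} and used only as a black box to feed into the general smart-restart guarantee (Theorem~\ref{thm:nonpcGAwAssume2}) and derive Theorem~\ref{thm:noisyom}. Hence there is no ``paper's own proof'' to compare your proposal against; any assessment of your sketch would have to be against the original argument in~\cite{FriedrichKKS17}, not against anything in this paper.

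That said, a brief remark on the sketch itself: the overall shape---a per-bit bias lower bound of order $1/(\sigma\sqrt n)$ obtained from the Gaussian comparison and the $\Pr[\Delta=0]=\Omega(1/\sqrt n)$ anticoncentration, combined with a gambler's-ruin argument to rule out absorption at~$0$ and a drift argument to push the frequencies to~$1$---is indeed the strategy of~\cite{FriedrichKKS17}. The place where your outline is thin is exactly the ``delicate point'' you flag at the end: the bias bound $\beta_t=\Omega(1/(\sigma\sqrt n))$ relies on $\Pr[\Delta=0]=\Omega(1/\sqrt n)$, which in turn needs $\sum_k p^t_k(1-p^t_k)=O(n)$ \emph{and} that not too many frequencies are already near $0$ or $1$; the original proof handles this by a phased argument (first showing all frequencies stay bounded away from~$0$, then driving them up), and your sketch would need to make this decoupling explicit rather than leave the two arguments mutually conditional.
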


We note that here the cGA is used without restricting the frequencies to the interval $[1/n,1-1/n]$, whereas more commonly (and in the remainder of this paper) the cGA is equipped with the margins $1/n$ and $1-1/n$ to avoid that frequencies reach the absorbing boundaries $0$ or $1$. Since our general runtime results do not rely on such implementation details but merely lift a result for a particular cGA to its smart-restart version, this poses no problems for us. As a side remark, though, we note that we are very optimistic that the above result from~\cite{FriedrichKKS17} holds equally for the setting with frequency margins. 

More interestingly, the runtime result above is not of the type that for $\mu$ sufficiently large, the expected runtime is $O(\mu T)$ for some $T$ (since $\mu$ appears also in the $\log(\mu n)$ term). Fortunately, with Theorem~\ref{thm:nonpcGAwAssume2} at hand, we have an easy solution. By only regarding values of $\mu$ that are at most $n^C$ for some constant $C$ (which we may choose), the $\log(\mu n)$ term can by bounded by $O(\log n)$. Since the minimal applicable $\mu$ (the $\tilde \mu$ in the notation of Theorem~\ref{thm:nonpcGAwAssume2}) depends on $\sigma^2$, this also implies that we can only regard polynomially bounded variances, but it is clear that any larger variances can be only of a purely academic interest. We thus formulate and prove the following result. 
We note that with more work, we could also have extended Theorem~\ref{thm:nonpcGAwAssume} to directly deal with the runtime behavior described in Theorem~\ref{thm:cgaFKKS}. For example, we could exploit that the geometric series showing up in the analysis do not change significantly when an extra logarithmic term is present. However, this appears to be a lot of work for a logarithmic term for which it is not even clear if it is necessary in the original result. Hence, we will not discuss them and focus on the following result.

\begin{theorem}
Let $C\ge 1$ and $U > 1$. Let $h : [0,\infty) \to [0,\infty)$ and $h\in \omega(1) \cap n^{o(1)}$. Consider the smart-restart cGA with the update factor $U$ and budget factor $b$ optimizing the $n$-dimensional \onemax function with additive centered Gaussian noise with variance $\sigma^2 \le n^C$. Then outside a rare event holding with probability $n^{-\omega(1)}$, the following runtime estimates are true.
\begin{itemize}
\item If $b=\Theta(1/h(n))$, then the expected runtime is $O(h(n) \sigma^4 n \log^2 n)$.
\item If $b = O(1) \cap \Omega(1/\log^2 n)$, then the expected runtime is $O(h(n) \sigma^4 n \log^3 n)$.
\end{itemize}
\label{thm:noisyom}
\end{theorem}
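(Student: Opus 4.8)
The plan is to derive Theorem~\ref{thm:noisyom} by feeding the cGA runtime bound for noisy \onemax (Theorem~\ref{thm:cgaFKKS}) into the general smart-restart guarantee under polynomially bounded parameters, Theorem~\ref{thm:nonpcGAwAssume2}. Since $\sigma^2=0$ is the noise-free case already covered by Theorem~\ref{thm:smartomjump}, we may assume $\sigma^2>0$. The one genuinely new ingredient is to recast Theorem~\ref{thm:cgaFKKS} into Assumption~(L'). I would fix a constant $C'$ large enough that $h(n)\sigma^2\sqrt n\log n\le n^{C'}$ for all large $n$ --- possible because $\sigma^2\le n^C$ and $h\in n^{o(1)}$ --- and set $\mu^+:=n^{C'}$ and $\tilde\mu:=h(n)\sigma^2\sqrt n\log n$. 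For every $\mu$ with $\tilde\mu\le\mu\le\mu^+$, Theorem~\ref{thm:cgaFKKS} (more precisely, the analysis of~\cite{FriedrichKKS17} behind it, which yields the statement for every polynomially bounded $\mu\ge\tilde\mu$, not just the stated boundary value) guarantees that after $O(\mu\sigma^2\sqrt n\log(\mu n))$ iterations of the margin-free cGA all frequencies are at $1$; the restriction $\mu\le\mu^+$ turns $\log(\mu n)$ into $O(\log n)$, and since each iteration spends two fitness evaluations and the next sample after all frequencies reach $1$ is $1^n$, this says that the cGA samples the optimum within $\mu T$ fitness evaluations for $T:=O(\sigma^2\sqrt n\log n)$, with probability $p:=1-o(1)$. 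That is exactly Assumption~(L') with parameters $\tilde\mu,\mu^+,T,p$.

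Given Assumption~(L'), the rest is bookkeeping. First, since $1-\frac1{U^2}$ is a constant below $1$ and $p=1-o(1)$, for $n$ large we have $p\in(1-\frac1{U^2},1)$, so Theorem~\ref{thm:nonpcGAwAssume2} applies and bounds the expected runtime, outside an event of probability at most $(1-p)^{|\calL|}$, by a constant times $\max\{b\tilde\mu^2,T^2/b\}+\tilde\mu T$ (the prefactors $\frac{U^2}{U^2-1}+\frac{(1-p)U^2}{1-(1-p)U^2}$ and $\frac{pU}{1-(1-p)U}$ being $O(1)$ for constant $U$ and $p\to1$). Next I would check that this exceptional probability is $n^{-\omega(1)}$: for any $b=\Omega(1/\log^2 n)$ the quantity $\max\{\tilde\mu,T/b\}$ is polynomially bounded in $n$ (again using $\sigma^2\le n^C$ and $h\in n^{o(1)}$), so $\ell'=\Theta(\log n)$, while $2U^{\ell-1}\le\mu^+=n^{C'}$ caps the largest index of $\calL$ at $\Theta(\log n)$; choosing $C'$ large enough makes $|\calL|=\Theta(\log n)$, and then $(1-p)^{|\calL|}=\exp\big(\Theta(\log n)\cdot\ln(1-p)\big)=n^{-\omega(1)}$ because $\ln(1-p)\to-\infty$. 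This is the rare event in the statement.

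It then remains to substitute $\tilde\mu=h(n)\sigma^2\sqrt n\log n$ and $T=O(\sigma^2\sqrt n\log n)$ into $\max\{b\tilde\mu^2,T^2/b\}+\tilde\mu T$. For $b=\Theta(1/h(n))$ each of the three terms $b\tilde\mu^2$, $T^2/b$, $\tilde\mu T$ is $O(h(n)\sigma^4 n\log^2 n)$ (and $b\tilde\mu^2$ is in fact $\Theta$ of it), giving the first estimate $O(h(n)\sigma^4 n\log^2 n)$. For $b$ in the wider window $\Omega(1/\log^2 n)\cap O(1)$, the term $T^2/b$ is largest when $b$ is smallest and $b\tilde\mu^2$ largest when $b$ is largest; substituting these extreme values, and if helpful replacing $\tilde\mu$ by a more slowly growing but still admissible choice of the form $\omega(1)\cdot\sigma^2\sqrt n\log n$ (allowed by Theorem~\ref{thm:cgaFKKS}) to keep the $b\tilde\mu^2$ contribution low, yields the second estimate. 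This is a routine computation that I would not spell out in detail.

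I expect the only real obstacle to be the first step: Theorem~\ref{thm:cgaFKKS} is not literally in the ``runtime $\le\mu T$ for all large $\mu$'' shape, both because its iteration count carries the $\mu$-dependent factor $\log(\mu n)$ and because it is phrased for a single population size. Both issues are precisely what the truncation to $\mu\le\mu^+=n^{C'}$ in Theorem~\ref{thm:nonpcGAwAssume2} (as opposed to the unrestricted Theorem~\ref{thm:nonpcGAwAssume}) is designed to absorb --- it collapses $\log(\mu n)$ to $O(\log n)$ and, via $\sigma^2\le n^C$, confines $\tilde\mu$ to a polynomial range --- at the mild cost of the $n^{-\omega(1)}$ failure probability. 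Everything downstream is mechanical.
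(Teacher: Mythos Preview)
Your proposal is correct and follows essentially the same route as the paper: verify Assumption~(L') from Theorem~\ref{thm:cgaFKKS} with $\tilde\mu = h(n)\sigma^2\sqrt n\log n$, $\mu^+$ a suitable polynomial in $n$, $T = O(\sigma^2\sqrt n\log n)$, and $p=1-o(1)$, then invoke Theorem~\ref{thm:nonpcGAwAssume2} and check that $|\calL|=\Theta(\log n)$ so that $(1-p)^{|\calL|}=n^{-\omega(1)}$. You are in fact more explicit than the paper on two points it passes over quickly: that Theorem~\ref{thm:cgaFKKS} must be read as holding for all $\mu$ in the range $[\tilde\mu,\mu^+]$ (not just the boundary value), and that the truncation $\mu\le\mu^+$ is precisely what collapses $\log(\mu n)$ to $O(\log n)$.
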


\begin{proof}
  By Theorem~\ref{thm:cgaFKKS}, we have Assumption (L') satisfied with $\tilde \mu = h(\mu) \sigma^2 \sqrt n \ln(n)$, $\mu^+ = n^{2C}$, $T = O(\sigma^2 \sqrt n \log n)$, and $p = 1-o(1)$. Consequently, any $b \in n^{-o(1)} \cap n^{o(1)}$ gives that $\ell' = \min\{\ell \mid 2U^{\ell-1} \ge \tilde{\mu}, B_{\ell} \ge 2U^{\ell-1} T\} = (1 \pm o(1)) (\frac 12 \log_{U}(n) + \log_{U}(\sigma^2)) \le (1+o(1)) C \log_{U}(n)$. Since $\mu^+ = n^{2C}$, we have $|\calL| \ge (1 \pm o(1)) C \log_{U}(n)$. With Theorem~\ref{thm:nonpcGAwAssume2}, we have proven our claim.
\end{proof}

We remark that the parallel-run cGA has an expected runtime of $O(h(n) \sigma^4 n \log^3 n)$ outside a rare event of probability $n^{-\omega(n)}$.

\section{Experimental Results}
\label{sec:exper}

In this section, we experimentally analyze the smart-restart mechanism proposed in this work. We concentrate on the smart-restart cGA, since the cGA is one of the best-unterstood EDAs and since it has only a single parameter which governs exactly how strong the update of the probabilistic model is in each iteration. We regard PBIL as a more complex EDA in Section~\ref{sec:pbil}.

Since, apart from the analysis of the cGA on \onemax by~\citet{LenglerSW21}, such data is not yet available, we start with an investigation of how the runtime of the original cGA depends on the (hypothetical) population size $\mu$. This will in particular support the assumption, underlying our smart-restart strategy and the parallel-run strategy from~\cite{Doerr21cgajump}, that the runtime can be excessively large when $\mu$ is below some threshold, and moderate and linearly increasing with $\mu$ when $\mu$ is larger than this threshold.

We then analyze the performance of the two existing approaches to automatically find good values for~$\mu$. Our focus is on understanding how one can relieve the user of an EDA from the difficult task of setting this parameter, not on finding the most efficient algorithm for the benchmark problems we regard. For this reason, we do not include other algorithms in this investigation. We note, though, that EDAs have shown a superior performance on the \jump and \DLB benchmarks~\cite{HasenohrlS18,Doerr21cgajump,WangZD21}, so clearly these are interesting algorithms for these two problems.

\subsection{Experimental Settings}

We ran the original cGA (with varying population sizes), the parallel-run cGA, and our smart-restart cGA (with two budget factors) on four benchmark problems, both without noise and in the presence of Gaussian posterior noise of four different strengths. For each experiment for the parallel-run cGA and our smart-restart cGA, we conducted 20 independent trials. Due to the often extremely large runtimes in the regime with genetic drift, only 10 independent trials were conducted for the original cGA. The detailed settings for our experiments were as follows.
\begin{itemize}
\item Benchmark functions and problem sizes: \onemax (problem size $n=100$), \LO ($n=50$), \jump ($n=50$ and jump size $k=10$), and \DLB ($n=30$). The population size for \onemax was chosen identical to the one used in~\cite{FriedrichKKS17}, namely $n = 100$. For the other three problems, taking into account the longer runtimes, we chose relatively small problem sizes. Since for these our experimental results fit the known theoretical results (see Sections~\ref{ssec:benchmarks} and~\ref{ssec:specprobs}), we are confident that they are still representative.
\item Noise model: additive centered Gaussian posterior noise with variances ${\sigma^2=\{0,n/2,}$ ${n,2n,4n\}}$ as described in Section~\ref{ssec:intronoise}.
\item Termination criterion: Since the original cGA with unsuitable population sizes often did not find the optimum in a reasonable time, we imposed the following maximum numbers of generations and aborted the run after this number of generations: $\lceil n^4\ln n \rceil$ for \onemax, $n^5$ for \LO, $n^{k/2}$ for \jump, and $10n^5$ for \DLB. We did not define such a termination criterion for the parameter-less versions of the cGA since they always found the optimum in an affordable time.
\item Population size of the original cGA: $\mu=2^{[1..10]}$ for \onemax and \LO, $\mu=2^{[9..18]}$ for \jump, and $\mu=2^{[1..14]}$ for \DLB. 
The reason for omitting the range $\mu=2^{[1..8]}$ for \jump is the large runtime observed on this benchmark for small population sizes.
\item Budget factor $b$ for the smart-restart cGA: $16$ and $1/\ln n$. As explained in the introduction, the budget factors $b=16$ and $\Theta(1 / \ln n)$ are two proper choices. We chose the constant $1$ based on the experimental results on \jump and \DLB without noise (noise variance $\sigma^2=0$) in Figures~\ref{fig:jump} and~\ref{fig:dlb}. 
\item Update factor $U$ for the smart-restart cGA: $2$. Doubling the parameter value after each unsuccessful run ($U=2$) is a natural choice. We note that in~\cite{DoerrZ20gecco}, we also did some experiments with $U = \sqrt 2$, but these mostly gave inferior results. 

\end{itemize}

\subsection{Experimental Results and Analysis I: The cGA with Different Population Sizes}
 
The curves in Figures~\ref{fig:om}--\ref{fig:dlb} (excluding the three right-most points on the $x$-axis) show the runtime (number of fitness evaluations) of the original cGA with different population sizes when optimizing our four benchmarks under Gaussian noise with different variances (including the noise-free setting $\sigma^2=0$). Given are the median runtime together with the first and third quartiles. When a run was stopped because the maximum number of function evaluations was reached, we simply and bluntly counted the runtime up to this point as runtime. Clearly, there are better ways to handle such incomplete runs, but since a fair computation for these inefficient parameter ranges is not too important, we did not start a more elaborate evaluation. To ease the comparison, we also plotted the run budgets $B = b \mu^2$ which the smart-restart algorithm with budget factor $b$ would have with population size~$\mu$.

\begin{figure}[!ht]
\centering
\includegraphics[width=5.0in]{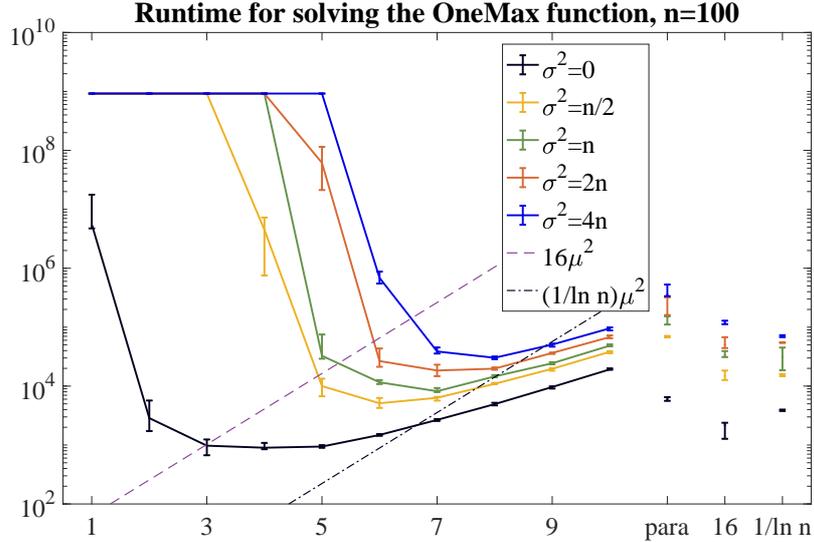}
\caption{The median number of fitness evaluations (with the first and third quartiles) of the original cGA with different $\mu$ ($\log_2 \mu \in \{1,2,\dots,10\}$), the parallel-run cGA (``para''), and the smart-restart cGA with two budget factors ($b=16$ and $b=1/\ln n$) on the \onemax function ($n=100$) under Gaussian noise with variances $\sigma^2=0, n/2, n, 2n, 4n$ in 20 independent runs (10 runs for the original cGA). }
\label{fig:om}
\end{figure} 

\begin{figure}[!ht]
\centering
\includegraphics[width=5.0in]{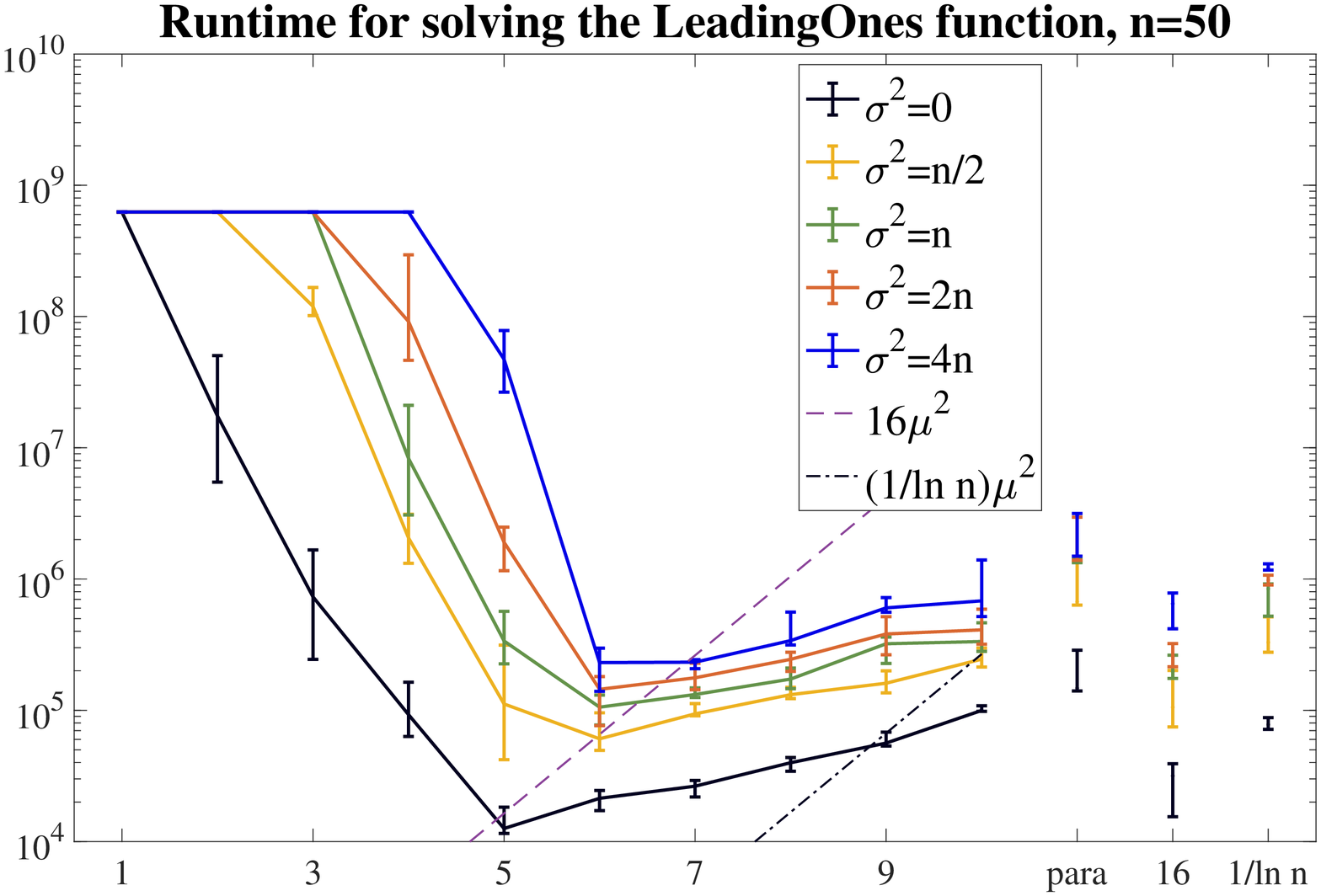}
\caption{The median number of fitness evaluations (with the first and third quartiles) of the original cGA with different $\mu$ ($\log_2 \mu \in \{1,2,\dots,10\}$), the parallel-run cGA (``para''), and the smart-restart cGA with two budget factors ($b=16$ and $b=1/\ln n$) on the \LO function ($n=50$) under Gaussian noise with variances $\sigma^2=0, n/2, n, 2n, 4n$ in 20 independent runs (10 runs for the original cGA). }
\label{fig:lo}
\end{figure} 

\begin{figure}[!ht]
\centering
\includegraphics[width=5.0in]{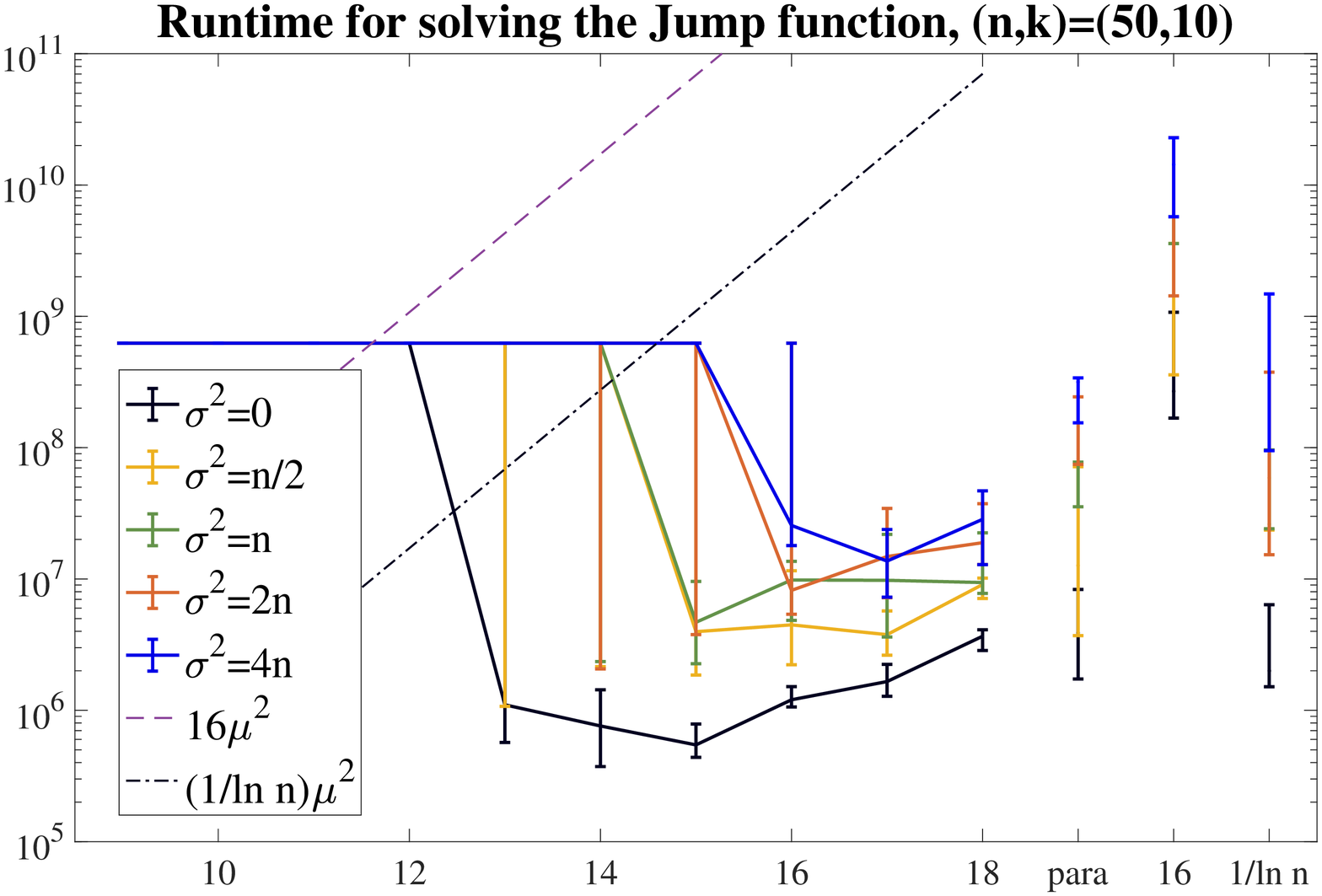}
\caption{The median number of fitness evaluations (with the first and third quartiles) of the original cGA with different $\mu$ ($\log_2 \mu \in \{9,10,\dots,18\}$), the parallel-run cGA (``para''), and the smart-restart cGA with two budget factors ($b=16$ and $b=1/\ln n$) on the \jump function with $(n,k)=(50,10)$ under Gaussian noise with variances $\sigma^2=0, n/2, n, 2n, 4n$ in 20 independent runs (10 runs for the original cGA). }
\label{fig:jump}
\end{figure} 

\begin{figure}[!ht]
\centering
\includegraphics[width=5.0in]{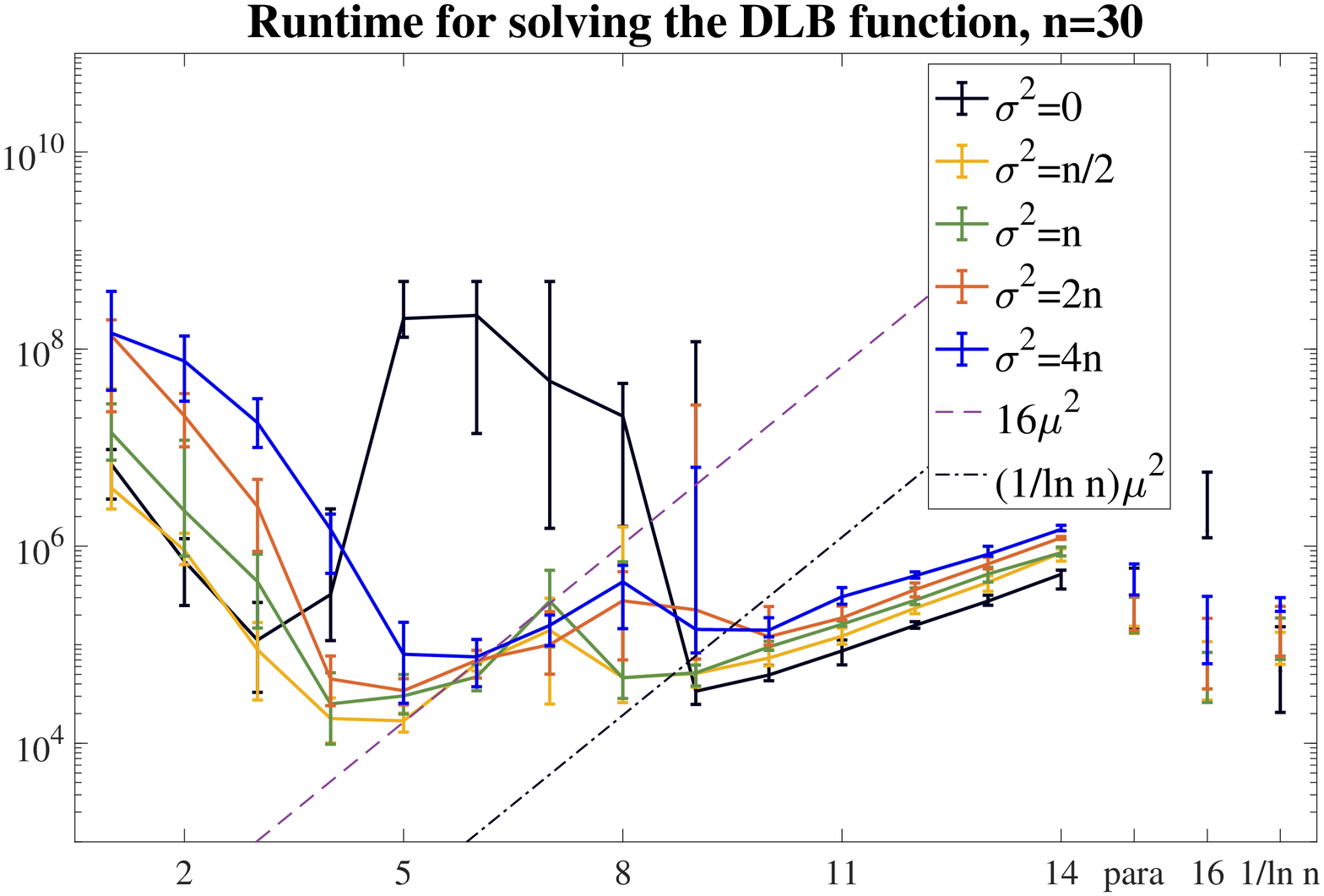}
\caption{The median number of fitness evaluations (with the first and third quartiles) of the original cGA with different $\mu$ ($\log_2 \mu \in \{1,2,\dots,14\}$), the parallel-run cGA (``para''), and the smart-restart cGA with two budget factors ($b=8$ and $b=0.5/\ln n$) on the \DLB function ($n=30$) under Gaussian noise with variances $\sigma^2=0, n/2, n, 2n, 4n$ in 20 independent runs (10 runs for the original cGA). }
\label{fig:dlb}
\end{figure}


The results displayed in Figures~\ref{fig:om}--\ref{fig:dlb} typically show that the runtime of the cGA is roughly unimodal in the population size~$\mu$. For values of $\mu$ smaller than the optimal value, the runtime steeply increases and is accompanied by larger variances. For larger values of $\mu$, we typically observe a moderate, roughly linear increase of the runtime. The variances are relatively small here. 

Let us regard these results in some more detail. For \onemax, we see a perfect unimodal runtime behavior. The minima of the runtime curves are not strongly pronounced, but for population sizes smaller than the optimum one by a factor of four or more, drastic performance losses are observed. For larger population sizes, a roughly linear increase of the runtime is well visible. We also note that the optimal population size increases with the noise level, which fits the intuition that larger noise levels lead to larger runtimes, which need larger population sizes to prevent genetic drift throughout the runtime.  

Our experiments do not show the bimodal runtime behavior observed in~\cite{LenglerSW21}. This is not suprising given that the bimodal pattern is very weak. In Figure~2 in~\cite{LenglerSW21}, the runtime pattern for relatively large problem size $n=1000$ shows a global minimum at $\mu \approx 130$ (where the genetic drift is low as also shown in that figure). There is a second (local) minimum at around $\mu \approx 12$, but its runtime is only around 4\% smaller than the runtime at the local maximum between the two minima. Given these small differences, seen for problem size $n=1000$ in $3000$ independent runs, it is not surprising that our experiments, conducted for smaller problem size and larger ranges of $\mu$ (resulting in much larger ranges of the runtimes), cannot detect this bimodal runtime behavior.

For \leadingones, we observe a more pronounced optimal value for $\mu$. Reducing $\mu$ below this level leads to a clear increase of the runtime, typically at least by a factor of $10$ for each halving of $\mu$ (which is still less drastic than for \onemax or \jump functions). Interestingly, the optimal population size is relatively independent from the noise level (say compared to the \onemax results). We have no explanation for this. 

For \jump functions, the optimal $\mu$-value is again less pronounced, however reducing the population size $\mu$ below the efficient values immediately gives a catastrophic increase of the runtime, almost always leading to all runs being stopped because the maximum number of generations is reached. In the very narrow transition regime between efficient and catastrophic optimization, we observe large variances of the runtime. This could indicate that there is not a continuous increase of the typical runtime, but rather an increase of the probability that a run enters an unfavorable situation, e.g., caused by genetic drift. 

We recall that we ran the parameter-less versions of the algorithm without a termination criterion (since they always found the optimum), so it is for this reason that some of these runtimes appear larger than those for small static values of $\mu$ (which were just stopped after $n^{k/2} = 50^5 = 312{,}500{,}000$ iterations, i.e., $6.25 \cdot 10^8$ fitness evaluations, when not optimum was found before that).

The runtime behavior on \DLB is harder to understand. There is a clear ``linear regime'' from $\mu=2^9$ or $\mu=2^{10}$ on, again with very small variances. There is also a steep increase of the runtimes roughly starting at $\mu2^4$. In between these two regimes, the runtime behavior is hard to understand. The noisy runs show a small increase of the runtime in this middle regime together with slightly increased variances. The noise-free runs, however, are massively slower than the noisy ones, with large variances and a decent number of unsuccessful runs. We have no explanation for this. 

Apart from the runtimes on \DLB (though to some extent also here, namely in the noisy runs), our results indicate a runtime behavior as described in Assumption~(L). We have no proof for the fact that this behavior is caused by the effect of genetic drift and such a proof is most likely not easy to give. For this work, however, such a proof is not indispensable -- what counts is that our understanding of genetic drift led to the development of the smart-restart scheme, which both in mathematical runtime analyses and in experiments showed a good performance and this is without the need to tune the hypothetical population size of the cGA (which is, in turn, indispensable when using static parameters as shown by our experiments).
%

As a side result, this data confirms that the cGA has a good performance on noise-free \jump functions, not only in asymptotic terms as proven in~\cite{HasenohrlS18,Doerr21cgajump}, but also in terms of actual runtimes for concrete problem sizes. On a \jump function with parameters $n=50$ and $k=10$, a classic mutation-based algorithm would run into the local optimum and from there would need to generate the global optimum via one mutation. For standard bit mutation with mutation rate $\frac 1n$, this last step would take an expected time of $n^{k} (\frac {n}{n-1})^{n-k}$, which for our values of $n$ and $k$ is approximately $2.2 \cdot 10^{17}$. With the asymptotically optimal mutation rate of $\frac kn$ determined in~\cite{DoerrLMN17}, this time would still be approximately $7.3 \cdot 10^{10}$. In contrast, the median optimization time of the cGA with $\mu \in 2^{[15..18]}$ is always below $4 \cdot 10^{6}$.

Our data also indicates that a good performance of the cGA can often be obtained with much smaller population sizes (and thus more efficiently) than what previous theoretical works suggest. For example, in \cite{FriedrichKKS17} a population size of $\omega(\sigma^2 \sqrt n \log n)$ was required for the optimization of a noisy \onemax function via the cGA. In their experiments on a noisy \onemax function with $n=100$ and ${\sigma^2=n}$, a population size (called $K$ in~\cite{FriedrichKKS17} to be consistent with some previous works) of $\mu = 7\sigma^2 \sqrt n (\ln n)^2\approx 148{,}000$ was used, which led to a runtime of approximately $200{,}000$ (data point for $\sigma^2 = 100$ interpolated from the two existing data points for $\sigma^2 = 64$ and $\sigma^2 = 128$ in the left chart of Figure~1 in~\cite{FriedrichKKS17}).
In contrast, our experiments displayed in Figure~\ref{fig:om} suggest that population sizes between 64 and 256 are already well sufficient and give runtimes clearly below $20{,}000$.

We have to admit that we do not fully understand this number $200{,}000$ from~\cite{FriedrichKKS17} and expect that it should be much larger. Our skepticism is based both on theoretical and experimental considerations. On the theoretical side, we note that even in the absence of noise and with the frequency vector having the (for this purpose) ideal value $\tau = (\frac 12, \dots, \frac 12)$, the sum $\|\tau\|_1$ of the frequency values increases by an expected value of $O(\frac 1\mu \sqrt n)$ only (with small leading constant; an absolute upper bound of $\frac 1{2\mu} \sqrt n$ follows, e.g., easily from~\cite{BerendK13}). Hence after only $200{,}000$ iterations, the frequency sum $\|\tau\|_1$ should still be relatively close to $n/2$. Since the probability to sample the optimum is $\prod_{i=1}^n (1-\tau_i) \le \exp(-\|\tau\|_1)$, it appears unlikely that the optimum is sampled within that short time. Our experimental data displayed in Figure~\ref{fig:om} suggests an affine-linear dependence of the runtime on $\mu$ when $\mu$ is at least $2^8$. From the median runtimes for $\mu=2^9$ and $\mu = 2^{10}$, which are $T_9 = 24{,}384$ and $T_{10} = 48{,}562$, we would thus estimate a runtime of $T(\mu) = T_9 + (T_{10}-T_9) (\mu - 2^9) 2^{-9}$ for $\mu \ge 2^{10}$, in particular, $T(7\sigma^2 \sqrt n (\ln n)^2) = 7{,}010{,}551$ for the data point $\sigma^2 = 100$ and $n=100$. 
To resolve this discrepancy, we conducted $20$ runs of the cGA with $\mu = \lfloor 7\sigma^2 \sqrt n (\ln n)^2 + \frac 12 \rfloor$, $\sigma^2 = 100$, $n = 100$ and observed a median runtime of 5,728,969 (and a low variance, in fact, all 20 runtimes were in the interval $[5{,}042{,}714; 6{,}131{,}522]$). So most likely, the number of $200{,}000$ given in~\cite{FriedrichKKS17} is not correct, and the price for the large value of $\mu$ is significantly larger than what the $200{,}000$ suggests.

\subsection{Experimental Results and Analysis II: Runtimes of the Parallel-Run cGA and the Smart-Restart cGA}

The three right-most items on the $x$-axis in Figures~\ref{fig:om}--\ref{fig:dlb} show the runtimes of the parallel-run cGA and the smart-restart cGA (with two budget factors~$b$). Figures~\ref{fig:om}--\ref{fig:dlb} also plot the two budgets (number of fitness evaluations) $16\mu^2$ and $(1/\ln n)\mu^2$ corresponding to $b=16$ and $1/\ln n$ respectively. 

The intersection point of the runtime curve of the cGA and the budget curve is a good indication for the $\mu$-value with which the smart-restart cGA finds the optimum, and thus it is also a good indication for the runtime of the smart restart cGA. For example, the smallest $\mu$ such that the $16\mu^2$ curve is above the noiseless \leadingones curve is $\mu=2^5$. Consequently, we expect the smart-restart cGA with budget factor $16$ to not find the optimum of the noiseless \leadingones functions in the runs with $\mu = 2, 4, 8, 16$, but only in the run with $\mu=32$. For this reason, the runtime of this smart-restart cGA should be equal to the runtime of the original cGA with $\mu=32$ plus $16\cdot 2^2 + 16\cdot 4^2 + 16 \cdot 8^2 + 16 \cdot 16^2 = 5440$, which fits roughly to our experimental data.

We note that for most runtime curves of the classic cGA, the intersection points with the budget curves are in the linear regime, which means that the corresponding smart-restart cGA avoids spending much time in the inefficient genetic drift regime. Some intersection points, e.g., those for the $(1/\ln(n))\mu^2$ budget in the \leadingones figure, are far in the linear regime. This indicates that the corresponding smart restart cGA misses the better (smaller) $\mu$-values that are already outside the genetic drift regime. As the curves for the original cGA and the performances of the smart-restart cGA show, the performance loss of this miss is not too large. It is clearly much less than the catastrophic performance loss from running the cGA in the regime with strong genetic drift.

In more detail, we see that for the easy functions \onemax and \LO under all noise assumptions, the smart-restart cGA with both values of $b$ has a smaller runtime than the parallel-run cGA. This can be explained from the runtime data of the original cGA in the corresponding figures: Since the runtimes are similar for several population sizes, the parallel-run cGA with its strategy to assign a similar budget to different population sizes wastes computational power, which the smart-restart cGA saves by aborting some processes early and not starting others. For both functions, the larger budget factor typically is superior. This fits again to the data on the original cGA, where we see the smaller budget factor curve intersecting the runtime curve clearly in the linear regime.

More interesting are the results for \jump and \DLB. We recall that here a wrong choice of the population size can be catastrophic, so these are the two functions where not having to choose the population size is a big advantage for the user. What is clearly visible from the data is that here the smaller budget factor is preferable for the smart-restart cGA. This fits our previously gained intuition that for these two functions, genetic drift is detrimental. Hence there is no gain from continuing a run that is suffering from genetic drift (we note that there is no way to detect genetic drift on the fly -- a frequency can be at a (wrong) boundary value due to genetic drift or at a (correct) boundary value because of a sufficiently strong fitness signal). 

What is clear as a general rule is that both algorithms, the parallel-run cGA and the smart-restart cGA with the small fitness evaluation budget factor, clearly do a good job in successfully running the cGA with a reasonable population size -- recall that for both of the difficult functions, a wrong choice of the population size can easily imply that the cGA does not find the optimum in $10^8$ iterations.

\section{Smart-Restart PBIL (Cross-Entropy Algorithm)}\label{sec:pbil}

To see how smart-restart EDAs perform on combinatorial optimization problems and also to discuss a third EDA in this work, we now conduct an experimental analysis of \emph{population-based incremental learning (PBIL)} (cross-entropy algorithm) on two optimization problems it was applied to in the literature. For comparison, two other restart strategies originally designed for evolutionary algorithms are also adapted to PBIL and implemented.

\subsection{Smart-Restart Population-Based Incremental Learning (Smart-Restart Cross-Entropy)}
\label{ssec:srPBIL}

Besides the cGA and UMDA, in~\cite{DoerrZ20tec} also a theoretical analysis of the boundary hitting time caused by genetic drift in the algorithm PBIL~\cite{Baluja94,BalujaC95} was conducted. This algorithm is identical to the basic version of the cross-entropy~(CE) algorithm for discrete optimization~\cite{CostaJK07,BoerKMR05}.\footnote{We point out a possible tiny difference between PBIL and CE. According to the algorithm description of the CE algorithm in~\cite{CostaJK07} and in the textbook~\cite[Algorithm~2.4.1]{RubinsteinK04}, the CE algorithm selects all individuals with fitness at least the fitness of the $\mu$-th best solution for the model update, whereas PBIL selects exactly $\mu$ best solutions breaking possible ties at random. However, the code provided in~\cite[Page 275]{RubinsteinK04} and also the recent pseudocode in~\cite[Algorithm~1]{WuKM17} both have a fixed cutoff (and they do not discuss the problem of tie-breaking; more precisely, the tie-breaking is determined by how the sorting routine breaks the ties; since we are talking about random samples, it is clear anyway that the tie-breaking is not important). Given this state of the art, we prefer to think of the CE algorithm as also working with a fixed cutoff, and thus say that PBIL and CE are identical, as also said in~\cite{KrejcaW20}.} It further includes the UMDA~\cite{MuhlenbeinP96} and the $\lambda$-max-min ant system ($\lambda$-MMAS)~\cite{StutzleH00}, a classic ant colony optimization algorithm, as special cases. 

The general procedure of PBIL is to sample $\lambda$ individuals and select $\mu$ best individuals to learn the current probabilistic model with learning rate $\rho$. An alternative parameterization, which we shall also prefer to ease the comparison with previous works, is to have as parameters (besides the learning rate $\rho$) the sample size $\lambda$ and the selection pressure $\eta$, which define $\mu$ via $\mu = \lceil \eta \lambda \rceil$. Similar to the other EDAs regarded in this work, we also use the artificial margins $\{1/n,1-1/n\}$ to prevent a premature convergence. The pseudocode of this algorithm is given in Algorithm~\ref{alg:PBIL}. 

\begin{algorithm}[!ht]
\caption{The algorithm PBIL (with learning rate $\rho$, sample size $\lambda$, and selection pressure $\eta$) to maximize a function $f: \{0,1\}^n \rightarrow \R$}
{\small
 \begin{algorithmic}[1]
 \STATE{$p^0=(\tfrac{1}{2}, \tfrac{1}{2},\dots,\tfrac{1}{2})\in [0,1]^n$}
 \FOR {$g=1,2,\dots$}
 \STATEx {$\quad\%\%$\textsl{Sample $\lambda$ individuals $X_1^g,\dots,X_{\lambda}^g$}}
 \FOR {$i=1,2,\dots,\lambda$}
 \FOR {$j=1,2,\dots,n$}
 \STATE $X_{i,j}^g \leftarrow 1$ with probability $p_{j}^{g-1}$ and $X_{i,j}^g \leftarrow 0$ with probability $1-p_{j}^{g-1}$
 \ENDFOR
 \ENDFOR
 \STATEx {$\quad\%\%$\textsl{Update of the frequency vector}}
 \STATE Let $\tilde{X}_1^g,\dots,\tilde{X}_{\mu}^g$ be the best $\mu = \lceil \eta \lambda \rceil$ individuals (ties broken randomly)
 \STATE {$p'=\frac{\rho}{\mu} \sum_{i=1}^{\mu}\tilde{X}_i^g + (1-\rho)p^{g-1}$}
 \STATE {$p^g=\min \{\max\{\tfrac{1}{n},p'\},1-\tfrac{1}{n}\}$}
 \ENDFOR
 \end{algorithmic}
 \label{alg:PBIL}
}
\end{algorithm}

Recall that~\cite[Theorem~3]{DoerrZ20tec} proved that for PBIL, the frequency of a neutral bit moves out of the interval $(c\frac{\rho}{\mu},1-c\frac{\rho}{\mu})$ for a constant $c\in(\frac12,\frac{1}{\sqrt2})$ in at most $\frac{16}{2-1/c}\frac{\mu}{\rho^2}$ generations in expectation, that is, at most $\frac{16}{2-1/c}\frac{\mu\lambda}{\rho^2}$ fitness evaluations in expectation. With the notation of the selection pressure $\eta=\mu/\lambda$ and by Markov's inequality, the probability that a boundary is reached in $b\lambda^2, b>\frac{16\eta}{(2-1/c)\rho^2}$, fitness evaluations, is at least $1-\frac{16\eta}{(2-1/c)\rho^2b}$. Hence, it fits into the framework of the smart-restart mechanism as discussed in 
Section~\ref{sec:parameterlesscGA}. We thus obtain a smart-restart PBIL by letting the smart-restart mechanism control the parameter $\lambda$ and keeping the parameters $\rho$ and $\eta$ fixed. 
%

\subsection{Other Restart Strategies\label{ssec:other}}
In Section~\ref{sec:intro}, we mentioned several other generic strategies to remove the population size as a parameter of an algorithm. The two most interesting restart strategies among these shall be included in our experimental investigation, namely the classical strategy from~\cite{HarikL99} (we only consider the first strategy of that work as the other strategy is a parallel-run strategy) and the well-cited strategy of \citet{AugerH05}. Both restart strategies are not originally designed for an EDA, hence we need to adapt them to the PBIL. 

The first strategy of \citet{HarikL99} restarts a crossover-based genetic algorithm when all individuals have become identical. As a reasonable analog for PBIL, we take the criterion that all entries of the probabilistic model $p^g$ are at the boundaries or close to them. More precisely, when using the frequency margins $\{ 1/n, 1- 1/n\}$, the restart criterion is that for all $i=1,\dots,n$, we have $p^g_i\in \{1/n,1-1/n\}$. For PBIL without margins, we conduct a restart when $p^g_i\in (0,1/n^2) \cup (1-1/n^2,1)$ for all $i = 1, \dots, n$. Note that we cannot set all $p^g_i\in\{0,1\}$ as the criterion for PBIL without margins, since the frequencies in a run of PBIL never reach (but approach) $0$ or $1$ for sufficiently large $n$. 


The strategy of~\citet{AugerH05} contains five criteria to trigger a restart with a larger population size for the CMA-ES. Four of the five relate to the covariance matrix or the evolution path and thus are specific to CMA-ES. For these, we did not find a natural analog for EDAs. Hence in our analysis for the PBIL, we only discuss their criterion related to the fitness. Here a restart is triggered if the range of the best fitnesses among the last $L=10+\lceil 30n/\lambda \rceil$ iterations is zero or the range of these best fitnesses and the best fitnesses in the current iteration is below a predefined threshold $\epsilon$. 
Algorithm~\ref{alg:other} gives the pseudocode for these two restart strategies (adapted to PBIL). We use HL and AH as shorthands for the two strategies.

\begin{algorithm}[!ht]
\caption{Two adapted restart mechanisms from~\cite{HarikL99,AugerH05} with update factor $U$ applied to an algorithm $\calA$ with parameter $\lambda$ for the maximization of a function $f: \{0,1\}^n \rightarrow \R$. }
{\small
 \begin{algorithmic}[1]
  \FOR { $\ell=1,2,\dots$}
 \STATE Run $\calA$ with parameter value $\lambda_{\ell}=2U^{\ell-1}$ for arbitrarily long time on the maximization problem~$f$ until one of the following situations happens. 
 \begin{itemize}
 \item For HL, at a certain iteration $g$, for all $i=1,\dots,n$, $p^g_i\in \{1/n,1-1/n\}$ for $\calA$ with margins or $p^g_i\in (0,1/n^2) \cup (1-1/n^2,1)$ for $\calA$ without margins
  \item For AH, at a certain iteration $g$, $\max V- \min V=0$ or $\max V'-\min V' < \eps$ where $V=\{v^{g-L},\dots,v^{g-1}\}, V'=V\cup V^g$, $L$ is the predefined memory size, $v^{t}$ is the best fitness value at generation $t$, and $V^g$ is the samples at the generation $g$
 \end{itemize}
  \ENDFOR
 \end{algorithmic}
 \label{alg:other}
}
\end{algorithm}

We note a central difference between our restart strategy (Algorithm~\ref{alg:nonpcGA}) and the two from Algorithm~\ref{alg:other}. Our smart-restart strategy stops each parameter trial when a prespecified computational budget (based on a general understanding of genetic drift in EDAs) is reached. In contrast, HL and AH try to detect on the fly when a situation is reached from which further progress appears difficult. 

\subsection{Experiments on Two Combinatorial Problems}

We empirically test the smart-restart PBIL on two combinatorial problems, namely the max-cut problem and the bipartition problem in the settings used in the textbook~\cite{RubinsteinK04}. 
We recall that our focus is to understand how effective the smart-restart scheme is in finding an efficient value for the sample size~$\lambda$. For that reason, we conduct experiments with the PBIL/CE algorithm as proposed in~\cite{RubinsteinK04} and with our smart-restart version of it, but we do not regard other EDAs. We also regard the two other restart mechanisms discussed above.

\subsubsection{Optimization Problems}

Now we brief{}ly introduce the two problems regarded in this section. Both were used in the textbook~\cite{RubinsteinK04} to demonstrate the power of the CE algorithm. 

In the \emph{max-cut problem}, the input consists of an undirected graph $G = (V,E)$ together with edge weights $w : E \to \R$. The target is to find a partition $(V_1,V_2)$ of the node set $V$ such that the sum of the weights of the edges from $V_1$ to $V_2$ is maximized. The max-cut problem is NP-complete and APX-hard. The best known approximation algorithm is a $0.878$-approximation algorithm based on semi-definite programming by \citet{GoemansW95}.

In the \emph{bipartition problem}, we are given the same input data, but now in addition the sizes of $V_1$ and $V_2$ are prescribed. Again, the target is to maximize the sum of the weights of the edges from $V_1$ to $V_2$. This problem is again NP-complete and APX-hard. An approximation algorithm with minimally weaker approximation ratio than the Goemans-Williamson algorithm for the max-cut problem was given by \cite{AustrinBG16}.

Both problems can easily be modeled as pseudo-Boolean optimization problems. For both, \citet{RubinsteinK04} propose synthetic problem instances with clear structures so that the unique optimal solution is known in advance. The precise details are not important to understand the remainder, so we omit the details and refer the interested reader to~\cite[Pages 46-49]{RubinsteinK04} for the max-cut problem and~\cite[Pages 145-147]{RubinsteinK04} for the bipartition problem. 

\subsubsection{Experimental Settings}

In all our experiments, we use the problems as proposed and modeled in~\citet{RubinsteinK04}. In our implementation of the core PBIL, we use the Matlab code provided in~\cite[Pages~274-276]{RubinsteinK04}. This code is formulated for the max-cut problem. For the bipartition problem, in addition we use our own implementation of the \emph{random partition generation algorithm}~\cite[Algorithm~4.6.1]{RubinsteinK04} (since no code for this algorithm is given in~\citet{RubinsteinK04}). This algorithm is used to sample a partition with prescribed sizes of the partition classes and given marginal distributions, which is the only difference between the PBIL for the max-cut and the partition problem.

We note that the Matlab code in~\cite[Pages~274-276]{RubinsteinK04} does not use the artificial margins. We speculate that their absence did not create problems because of the relatively large  population sizes used in their experiments. As discussed in Section~\ref{subsec:cga}, the artificial margins $\{1/n,1-1/n\}$ are usually utilized to avoid the frequencies reaching the absorbing boundaries $0$ or $1$. To obtain a complete picture, we shall conduct experiments both with and without margins. 

We use the following settings, again taken from~\cite{RubinsteinK04}.
\begin{itemize}
\item \textbf{Max-cut problem:} We use a problem size of $n=400$. For the PBIL, the learning rate is $\rho=1$, the selection pressure is $\eta=0.1$, and the sample size is $\lambda=1{,}000$. These are the settings from~\cite[Table~2.3]{RubinsteinK04}, where we note that the notation there writes $(\alpha,\rho,N)$ where we use $(\rho,\eta,\lambda)$.  
\item \textbf{Bipartition problem:} We use a problem size of $n=600$. For the algorithm parameters, we use $\rho=0.7$, $\eta=0.01$, and $\lambda=6{,}000$ as in~\cite[Table~4.7]{RubinsteinK04}.\footnote{\cite[Table~4.7]{RubinsteinK04} does not specify the value of $\rho$, so we used $\rho=0.7$ as specified in the preceding table~\cite[Table~4.6]{RubinsteinK04}.}
\end{itemize}

In the experiments with the smart-restart PBIL, we used the PBIL kernel as above (with the same values for $\rho$ and $\eta$), and only the value of $\lambda$ was set via the smart-restart mechanism. The following shows the hyperparameter choices for the smart-restart PBIL, which were identical for both optimization problems.
\begin{itemize}
\item \textbf{Update factor $U$}: We used the same (natural) value $U = 2$ as in our experiments with the smart-restart cGA. 
\item \textbf{Budget factor $b$}: We used the two factors $b = 96\frac{\eta}{\rho^2}$ and $b = 6\frac{\eta}{\rho^2\ln n}$ (depending on the values of $\eta$ and $\rho$ as used in the PBIL kernel). That is, $b=96\cdot\frac{0.1}{1^2}=9.6$ and $6\frac{0.1}{1^2\ln n}=0.6/\ln n$ as $\eta=0.1$ and $\rho=1$ for PBIL on the max-cut problem~\cite[Table~2.3]{RubinsteinK04}, and $b=96\frac{0.01}{0.7^2}=\frac{96}{49}$ and $6\frac{0.01}{0.7^2\ln n}=\frac{6}{49\ln n}$ as $\eta=0.01$ and $\rho=0.7$ for the bipartition problem~\cite[Table~4.7]{RubinsteinK04}. The motivation for these choices is as follows. We recall that in our experiments on the smart-restart cGA, we chose the budget factor $b=16$ so that the probability of detecting the genetic drift is at least $1/2$. We chose $b=1/\ln n$ since the order $\Theta(1/\log n)$ allows a union bound over all $n$ frequencies, and we chose the precise value $b = 1/\ln(n)$ based on preliminary experiments. For smart-restart PBIL, to ensure a detection probability of at least $1/2$, we set $b=\frac{32\eta}{(2-1/c)\rho^2}$ as discussed in Section~\ref{ssec:srPBIL}. For the $c\in(1/2,1/\sqrt2)$, we chose $c=3/5$ as roughly the middle point in this interval. This explains our choice $b=\frac{32\eta}{(2-5/3)\rho^2}=96\frac{\eta}{\rho^2}$. Since this first value of $b$ is $6\frac{\eta}{\rho^2}$ times the $16$, the first value for smart-restart cGA, for reasons of comparability we set the second value of $b$ also $6\frac{\eta}{\rho^2}$ times $1/\ln n$ (the second value of $b$ for the smart-restart cGA), that is, $6\frac{\eta}{\rho^2\ln n}$. 
\end{itemize}
All algorithms were terminated only when the optimum is found. 

In the experiments with the two restart mechanisms discussed in Section~\ref{ssec:other}, we used the same PBIL kernel with the same $(\eta,\rho)$ and initial  $\lambda=2$ as in the experiments with the smart-restart PBIL. For the AH restart strategy, the memory size $L=10+\lceil 30n/\lambda \rceil$ and the threshold $\epsilon=10^{-12}$ are set as in the original paper~\cite{AugerH05} (their notation for $\epsilon$ is \texttt{Tolfun}). Since in our experiments the runs with AH (when using frequency margins) took extremely long, in our experiments we terminate the algorithm the first time the number of fitness evaluations exceeds $1.5\times 10^7$ for the max-cut problem, and $3\times 10^6$ for the bipartition problem. We note that in our experiments, all 20 runs of our smart-restart PBIL (with or without margins) reached the optimum within these limits.

\subsubsection{Experimental Results and Analyses}
Figure~\ref{fig:pbilnew} shows the runtimes (measured by the number of fitness evaluations) of the original PBIL with margins, our smart-restart PBIL with margins, and HL with margins, utilizing the settings described above. We did not plot AH with margins as it fails to reach the global optimum in all 20 runs. We easily see a good performance of our smart-restart PBIL. For example, in terms of the median runtime (the red line in each box), the smart-restart PBIL with both values for $b$ has a smaller runtime than the PBIL with the population size suggested in~\cite{RubinsteinK04} for both problems, and better than the HL for the max-cut problem. For the bipartition problem, the smart-restart PBIL with smaller $b$ shows a clear superiority over the HL.
We notice the large variances for $b=96\frac{\eta}{\rho^2}$ for the bipartition problem and $b=6\frac{\eta}{\rho^2\ln n}$ for the max-cut problem, but we have no explanation for these. These variances, however, do not change our general impression that the smart-restart approach is generally preferable over using fixed parameters with the values suggested in~\cite{RubinsteinK04}. 

\begin{figure}[!ht]
\centering
\includegraphics[width=5.15in]{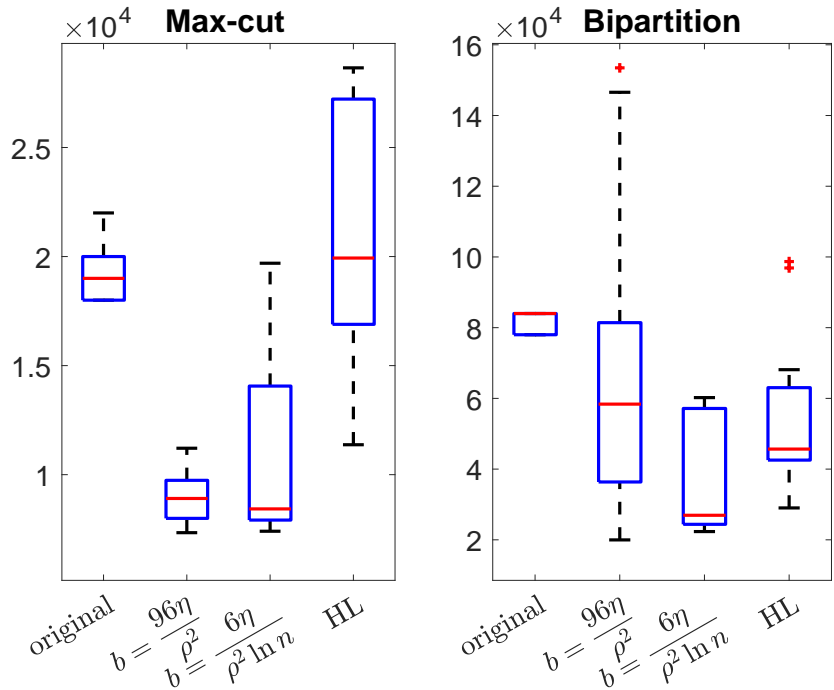}
\caption{Boxplots of runtimes of the original PBIL (with margins), the smart-restart PBIL (with margins) with two budget factors ($b=96\frac{\eta}{\rho^2}$ and $6\frac{\eta}{\rho^2\ln n}$), and HL-restart PBIL (with margins) on the max-cut problem ($n=400$)~\cite[Table~2.3]{RubinsteinK04} and bipartition problem ($n=600$)~\cite[Table~4.7]{RubinsteinK04} in 20 independent runs. }
\label{fig:pbilnew}
\end{figure} 

To understand the influence of using frequency margins, we collect the median results among 20 independent runs of the different algorithms without margins in Table~\ref{tbl:pbilnew}. For comparison, we also record the results from the runs with margins from Figure~\ref{fig:pbilnew}. 
From Table~\ref{tbl:pbilnew}, we deduce that the (common) usage of the margins is usually beneficial also for this problem setting, with the exception of the AH strategy. For  AH, a restart is triggered when for a certain time interval no or only small fitness changes are observed. Here, apparently, margins are not helpful as they prevent the samples from becoming too similar. We note that also without margins, our smart-restart PBIL succeeds in finding the optimum, albeit with larger runtimes. The runtime increase is most drastic for the larger budget factor as here more time is spent in a run, hence more time is lost, when a frequency due to genetic drift has reached the wrong boundary value.

When not using frequency margins, the ranking of the four algorithms is the same for both combinatorial problems: The smart-restart strategy with small $b$ performs  best, then HL, AH, and the smart-restart strategy with the large $b$. This is somewhat intuitive when recalling the ideas behind these strategies. The smart-restart strategy with small $b$ tries to estimate the first time when some frequency reaches a boundary due to genetic drift. HL restarts when all frequencies have actually reached the boundaries. AH restarts when only small fitness changes happen in fixed-length intervals of iterations (where we recall that the original AH strategy contained four more criteria that were specific to the CMA-ES). The smart-restart strategy with large $b$ tries to estimate the time when a fixed frequency (hence also the typical frequency) reaches the boundaries. In this light, it is natural that the smart-restart strategy with a small budget factor performs better than the one with a large factor as it suffices that a single frequency is stuck at the wrong boundary to prevent finding the optimum. The fact that the HL strategy of checking whether all frequencies have reached a boundary performs relatively well, could suggest that the estimates of the smart-restart strategy with a small budget are still relatively conservative, and that possibly a restart could have been triggered even earlier. It is not totally surprising that the general fitness-dependent AH strategy has a harder stand than the strategies based on the mechanics of EDAs  (but, as said, the original AH strategy was designed for the CMA-ES and included criteria exploiting the mechanics of the CMA-ES).

\begin{table}[!ht]
    \centering
    \caption{The median runtime for original, smart-restart, HL-restart, and AH-restart PBIL with and without margins on the max-cut and bipartition problems in 20 independent runs. The minimal median runtimes are in the bold font. 5 of 20 runs for the original PBIL without margins on the max-cut failed to find the optimum due to reaching the wrong boundaries. All 20 runs of the AH-restart PBIL with margins cannot reach the global optimum within the maximal number of the function evaluations.}
    \label{tbl:pbilnew}
    \begin{tabular}{ccccccc}
    \toprule
      Problem & Margins & PBIL & $b=96\frac{\eta}{\rho^2}$ & $b=6\frac{\eta}{\rho^2\ln n}$ & HL & AH \\ \hline
      Max-cut &  With & 19,000 & 8,908 & \bf{8,430} & 19,932 & $\ge1.5\times 10^7$\\ 
       & Without & 20,000 & 849,292 & \bf{19,182} & 20,268 & 119,900\\ \hline
       Bipartition & With & 84,000 & 58,376 & \bf{26,942} & 45,665 & $\ge3\times 10^6$\\ 
        & Without & 84,000 & 702,344 & \bf{26,430} & 61,838 & 221,161\\
        \bottomrule
    \end{tabular}
\end{table}

Similar to the results in Section~\ref{sec:exper}, we observe that our smart-restart PBIL with the smaller budget clearly achieves very good runtimes, no matter with or without margins.

\section{Conclusion}
\label{sec:con}
Choosing the parameters that control the genetic drift of estimation-of-distribution algorithms is one of the key difficulties for the practical usage of EDAs. To overcome this difficulty, we proposed a smart-restart mechanism that removes this parameter from the algorithm. Our mechanism is a simple restart strategy with exponentially growing population size, but different from previous works it sets a prior fitness evaluation budget for each population size based on a recent quantitative analysis estimating when genetic drift is likely to occur. 

Under a reasonable assumption on how the runtime depends on the population size, we theoretically analyzed our scheme and observed that it can lead to asymptotically optimal runtimes for the cGA and the UMDA. 

Via extensive experiments on \onemax, \LO, \jump, and \DLB, we showed the efficiency of the smart-restart cGA, also when compared with the parallel-run cGA. The results for the original cGA with different population sizes experimentally show that the population size is crucial for the performance of the cGA and that the theoretically suggested population size can be far away from the right one.

We also applied our smart-restart mechanism to the PBIL on two combinatorial problems. Our experiments showed again the efficiency of the smart-restart PBIL compared to the PBIL with the original settings and other restart strategies.

The problem of how to cope with genetic drift, naturally, is equally interesting for multivariate EDAs such as~\cite{BonetIV96,PelikanM99,MuhlenbeinM99,HarikLS06,ProbstR20}. For these, however, our theoretical understanding is limited to very few results such as~\cite{ZhangM04,LehreN19foga,DoerrK23tcs}. In particular, a quantitative understanding of genetic drift comparable to~\cite{DoerrZ20tec} is completely missing. Another interesting question is if dynamic choices of the population size in EDAs can be fruitful. In classic EAs, dynamic parameter choices have recently been used very successfully to overcome the difficulty of finding a suitable static parameter value, see, e.g., the survey~\cite{DoerrD20bookchapter}. How to use such ideas for EDAs is currently not at all clear. 

\section*{Acknowledgments}
This work was supported by National Natural Science Foundation of China (Grant No. 62306086), Science, Technology and Innovation Commission of Shenzhen Municipality (Grant No. GXWD20220818191018001), Guangdong Basic and Applied Basic Research Foundation (Grant No. 2019A1515110177).

This work was also supported by a public grant as part of the Investissement d'avenir project, reference ANR-11-LABX-0056-LMH, LabEx LMH.

This work has profited from many scientific discussions at the Dagstuhl Seminar 22081 ``Theory of Randomized Optimization Heuristics'' and the Dagstuhl Seminar 22182 ``Estimation-of-Distribution Algorithms: Theory and Applications''.
}

\newcommand{\etalchar}[1]{$^{#1}$}

\end{document}